\newcommand{\mat}[1]{\mathbf{#1}}
\newcommand{\pr}[1]{\mathbb{P}(#1)}
\newcommand{\E}{\mathbb{E}}
\newtheorem{theorem}{Theorem}[section]
\newtheorem{lemma}[theorem]{Lemma}
\newtheorem{definition}{Definition}
\newtheorem{problem}{Problem}
\newtheorem{proposition}{Proposition}
\newcommand*\samethanks[1][\value{footnote}]{\footnotemark[#1]}
\begin{document} 
\date{\today}
\title{Sparse Quadratic Logistic Regression in Sub-quadratic Time}
\author[1]{Karthikeyan Shanmugam\thanks{Equal contribution.}}
\author[2]{Murat Kocaoglu\samethanks}
\author[2]{Alexandros G. Dimakis}
\author[2]{Sujay Sanghavi}
\affil[1]{\small IBM Research, T. J. Watson Center}
\affil[2]{\small The University of Texas at Austin}
\affil[ ]{\small   karthikeyan.shanmugam2@ibm.com,  mkocaoglu@utexas.edu, dimakis@austin.utexas.edu, sanghavi@mail.utexas.edu}
\renewcommand\Authands{ and }

\maketitle

\begin{abstract} 
We consider support recovery in the {\em quadratic} logistic regression setting -- where the target depends on both $p$ linear terms $x_i$ and up to $p^2$ quadratic terms $x_i x_j$. Quadratic terms enable prediction/modeling of higher-order effects between features and the target, but when incorporated naively may involve solving a very large regression problem. We consider the {\em sparse} case, where at most $s$ terms (linear or quadratic) are non-zero, and provide a new faster  algorithm. It involves {\em (a)} identifying the weak support (i.e. all relevant variables) and {\em (b)} standard logistic regression optimization only on these chosen variables. The first step relies on a novel insight about correlation tests in the presence of non-linearity, and takes $O(pn)$ time for $n$ samples -- giving potentially huge computational gains over the naive approach. Motivated by insights from the boolean case, we propose a non-linear correlation test for non-binary finite support case that involves hashing a variable and then correlating with the output variable. We also provide experimental results to demonstrate the effectiveness of our methods.
\end{abstract} 

\section{Introduction}
In this paper we consider the following simple {\bf quadratic logistic regression setting}: we have $p$ predictor variables $X_i \in \mathcal{X} \subset \mathbb{R}$ each with finite support on the real line($\lvert  \mathcal{X} \rvert$ is finite), with $i=1,\ldots,p$, and one binary target variable $Y\in \{0,1\}$ which depends on $X$ as follows:
\begin{align}
\label{def:Y}
&\Pr(Y=1| \mathbf{X}=\mathbf{x}) = \sigma (\gamma f(\mathbf{x}))  = \frac{\exp(\gamma f(\mathbf{x}))}{1+\exp(\gamma f(\mathbf{x}))},\nonumber\\
&\text{where } f(x)=\sum_{ (i,j) \in Q} \beta_{i,j} x_i x_j + \sum_{j\in L} \alpha_j x_j + c
\end{align}

Note that, here we have indexed the quadratic terms by $Q \subseteq [p] \times [p]$ and the linear terms by $L \subseteq [p]$. Here, $\sigma(\cdot)$ is a monotone non-linear function whose output range is in $[0,1]$ and $\gamma$ is a rescaling constant. As a specific example, we can consider the non-linear function $\sigma$ to be a logistic function $\sigma(\cdot)= \frac{\exp(\cdot)}{1+\exp(\cdot)}$, but our results hold more generally. Here, $\gamma \in \mathbb{R}^{+}$ is a scaling parameter. 

Let the coefficients be bounded, i.e., $0<a \leq \lvert \beta_{i,j} \rvert,~ \lvert \alpha_j \rvert,~ \lvert c \rvert \leq b $. We assume that the function $f$ is \textbf{$s$-sparse}, i.e. $ \lvert Q \rvert + \lvert L \rvert \leq s$. The number of variables that the function depends on is $s \leq r \leq 2s$. 
This problem is called a sparse logistic regression with real features with finite support.  For example, $y$ can indicate if a user will click on a displayed advertisement and $x_i$ can be known user features. The function $f(\mat{x})$ captures how desirable an ad is to a user (each ad will be described by its own polynomial coefficients $\alpha,\beta$) and the non-linear function translates that into a probability of clicking which is then observed through a Bernoulli random variable $y$. The problem is to learn the polynomial coefficients $\alpha,\beta$ from samples $(\mat{x},y)$ generated from some known distribution. 

The well-known challenge with learning higher-order polynomials is the sample and the computational complexity.
The model has $\Theta(p^2)$ coefficients that must be learned, and the number of features $p$ can be in the order of thousands or even millions for many applications. The key simplifying assumption is that the number of actually influential features (and influential pairs of features) can be small.  
We therefore assume that the total number of non-zero linear and bi-linear terms in $f(\mat{x})$ is at most $s$,
\textit{i.e.} $||\alpha||_0 + ||\beta||_0 \leq s$.

Several methods have been used for sparse logistic regression, including 
maximum likelihood estimation with regularization and greedy methods~\cite{ng2004feature,koh2007interior,lee2006efficient}.
All previously known approaches for this problem have a running time that is $\Theta(p^2n)$, i.e. quadratic in the ambient dimension $p$. 

\subsection{Our Contributions}
\textbf{Algorithm:} We propose two algorithms for sparse quadratic support recovery for logistic regression, one for binary input variables and the other for bounded non-binary real valued variables. The algorithm for the binary case is {\em (a)} a simple correlation test per variable that identifies relevant variables (termed the {\em weak support}) and {\em (b)} a standard ML optimization on the set of all linear and quadratic terms formed by the weak support. The first step takes $O(pn)$ time for $p$ variables and $n$ samples; the second step is standard logistic regression with $O(w^2)$ features, where $w$ is the size of the weak support. The {\bf main insight} here is that {\em except for a measure-zero set of coefficients}, standard correlation tests recover the weak support even for quadratic terms -- but only when there are non-linearities (like the sigmoid function in logistic regression).  For the case of non-binary real valued variables with finite support, we propose a generalized non-linear correlation test for weak support recovery. First, a candidate variable is hashed using a random hash function and a correlation test performed to recover the weak support. The runtime complexity of the first step in this case is also $O(pn)$.

\textbf{Analysis:} In the binary case, we show that a simple linear correlation test is asymptotically effective when the non-linear function $\sigma(\cdot)$ is \textit{strongly non-linear}, i.e. it has $2^s$ non-zero terms in its Taylor expansion, and the polynomial coefficients are not in a set of measure 0 which we precisely characterize. We show finite sample complexity results for a specific non-linearity (piecewise-linear $\sigma(\cdot)$). In short, the fundamental property that is key to our results is the non-linearity $\sigma(\cdot)$ and an extra degree of freedom $\gamma$. Insights about non-linearity plays a strong role behind our non-linear correlation test for real variables with finite support. This test introduces non-linearity by transforming the features using a hash function before calculating the correlation with the target variable.

\textbf{Experiments:} We show that linear correlation test works on synthetic randomly chosen functions with tens of relevant variables and $1000$s of irrelevant variables. With tens of thousands of samples, it recovers $>90\%$ of the weak support. We also show how correlation test and addition of few quadratic terms based on the weak support to the set of features helps improve standard classifiers on a real data set. We also similar positive experimental results for the non-linear correlation test in the non-binary case.
 
\subsection{Related Work}
Logistic regression is a very well studied problem in statistics. Logistic regression involving a linear function of the variables is carried out by computing a maximum-likelihood estimator which is known to be convex. If the number of variables is small, i.e., if we have a sparse logistic regression problem, then $\ell_1$ regularization (sometimes other regularizers) is used to enforce sparsity. Obtaining faster convex optimization based solvers for this particular convex optimization problem is the focus for many prior studies (see \cite{lee2006efficient}\cite{koh2007interior} and \cite{friedman2010regularization}). In the online setting, \cite{agarwal2014scalable} consider the problem of adaptively including higher order monomials when using online convex optimization tools for regression problems on polynomials.

Another important classic problem is learning Ising models with pairwise interactions. Learning pairwise Ising Models on $p$ nodes can be thought of as $p$ parallel logistic regression problems involving a linear functional.  Ravikumar et al. \cite{Ravikumar10} propose convex optimization methods to solve these problems while Bresler et al. \cite{Bresler15} showed that simple correlation tests and a greedy procedure is sufficient. The second work is closer to ours in spirit. However, we work with quadratic polynomials of independent variables with finite support. 

Another line of work in statistics literature \cite{li2016robust,fan2015innovated} investigates the problem of identifying relevant linear and quadratic terms in a logistic regression problem in high dimensional settings with real covariates. \cite{fan2015innovated} propose a screening procedure to identify relevant variables. However, they consider a Gaussian mixture model and their method needs estimation of precision matrices that has quadratic runtime complexity. The recent work of \cite{li2016robust} proposes a greedy selection procedure for interaction terms and linear terms. At every step, their method involves evaluating the change in the logistic likelihood by adding every possible variable outside the current set and all the relevant quadratic terms. This has linear complexity per step. The evaluation of logistic likelihood has to be done on a growing set of variables which is also expensive. Contrary to these prior works, in this paper, we focus on real variables with finite support and achieve a total running time that is linear in the number of variables.

There has been a lot of work analyzing the time and sample complexity requirements for learning sparse polynomials over the boolean domain when the evaluations of the polynomial over the samples are directly observed. The closest to our work are \cite{li2015active,kocaoglu2014sparse}. In \cite{kocaoglu2014sparse}, the authors study the general problem of learning a sparse polynomial function from random boolean samples. They define the unique sign property (USP) property which is also key for our machinery. However, it has several key differences since it does not achieve sub-quadratic learning and only learns from samples that are uniform on the hypercube. The authors in \cite{li2015active} deal with the problem of adaptive learning of a sparse polynomial where the algorithm has the power to query a specific sample. In learning theory, there is a lot of work  \cite{kalai2009learning,valiant2012finding,mossel2003learning,feldman2006new} on learning real valued sparse polynomials where the polynomial is evaluated on boolean samples from a product distribution. The closest to our work is \cite{valiant2012finding} that gives a sub quadratic algorithm for learning any sparse quadratic polynomial of independent boolean variables with biases $1/2$ in sub-quadratic time. In our case, the setup involves a non-linearity and arbitrary biases of boolean variables although bounded away from $1/2$. 

\section{Problem Statement}
 In this paper we consider the sparse quadratic logistic regression setting as in (\ref{def:Y}). The main question we are interested in is:
\begin{problem}
 Given samples $\langle y_i,\mathbf{x}_i \rangle$ distributed according to the logistic regression model in $\ref{def:Y}$, when $f$ is a quadratic polynomial containing at most $s$ parities, then is it possible to learn the function $f$ in \textbf{sub-quadratic time $\mathbf{\tilde{o}(p^2)}$ }? If possible, is there an algorithm that runs in linear time $\tilde{O}(p)$?
\end{problem}
$\tilde{o},\tilde{O}$ are equivalent the usual $o,O$ notation except in that it could include terms that are independent of the dimension $p$ (like sparsity, number of samples etc.). In this paper, we consider this question in the simplest case, when the $X_i$'s are independent. For the boolean case where $\mathcal{X} \in \{+1,1\}$ we assume $\Pr[X_i=1]=p_i$. Otherwise, we assume that $X_i$ is distributed according to a discrete pmf on the support $\mathcal{X}$.

Since there are $\Theta(p^2)$ possible choices of non-zero coefficients, a log-likelihood optimization algorithm requires runtime that is polynomial in $p^2$ for estimating all the coefficients. 

\emph{A correlation test} with a single variable $X_i$ checks if $\lvert \mathbb{E}[Y(X_i-\mu_i)] \rvert > \epsilon$ or not by using the empirical mean to calculate the correlation value. In this work, we discover a surprising property of this simple test in relation to logistic regression on binary variables: We first note that if the logistic function is not present and when the function is observed directly, i.e., $Y = f(\mathbf{X})$, then the weak-support cannot be detected from correlation of $Y$ with individual binary variables. We discuss this in the next section and show how the non-linearity $\sigma$ helps in identifying relevant variables even in cases when it is not possible to identify using correlation tests.

\section{Simple Example: Non-linearity helps in the Boolean case}
Let $\mathcal {X} \in \{+1,-1\}$. We now show that having the both the non-linearity $\sigma(\cdot)$, and biases of the input binary variables that is {\em not} either 0,1 or 1/2 exactly, is crucial to correlation tests (and hence also our algorithm) to work. We demonstrate this through a simple example and follow it up with an informal discussion on why it happens.
Suppose we consider the problem of identifying the relevant binary variables $x_i,x_j$ out of $p$ dependent variables given the labeled samples $(Y,\mathbf{x})$, where $Y=f(\mathbf{x})$. Let $f$ be a simple quadratic function $C(x_i - \mu_i) (x_j - \mu_j)$. Clearly, all correlation tests of the form $\E \left[ Y (x_k - \mu_k) \right] $, where $\mu_k=\E[x_k]$ are identically $0$ for all $k \in [p]$. Hence, correlation tests cannot be used to identify $x_i$ or $x_j$. We are not aware of an algorithm to identify the weak support in linear time for these classes of functions.

Now, consider the same function $f$ with the logistic regression setup in $(\ref{def:Y})$. Let $\gamma=1$. It can be shown that $\E \left[ Y (x_k - \mu_k) \right] = \E \left[ \sigma(f) \left( x_k - \mu_k \right) \right] \neq 0$ for $k={i,j}$ and $0$ otherwise, when $p_i, p_j $ are \textit{away} from $0,1$ and $1/2$. Except in the non-trivial case when the variables are close to being uniform Bernoulli random variables, the correlation test works. Let $x_1,x_2$ be the two variables in $f(\mat{x})$. We plot $\E[\sigma(f)(X_1-\mu_1)]$ for $C=20, \gamma=1$ for all $p_1,p_2 \in [0,1]$ in Fig. \ref{Graphcorr}. It is clear that, when $p_1,p_2 \neq \{0,1,1/2\}$, the correlation is non-zero. If the distance from these points is larger, then the correlation is also higher.

 \begin{figure}[ht]
\centering
\includegraphics[width=9.5cm]{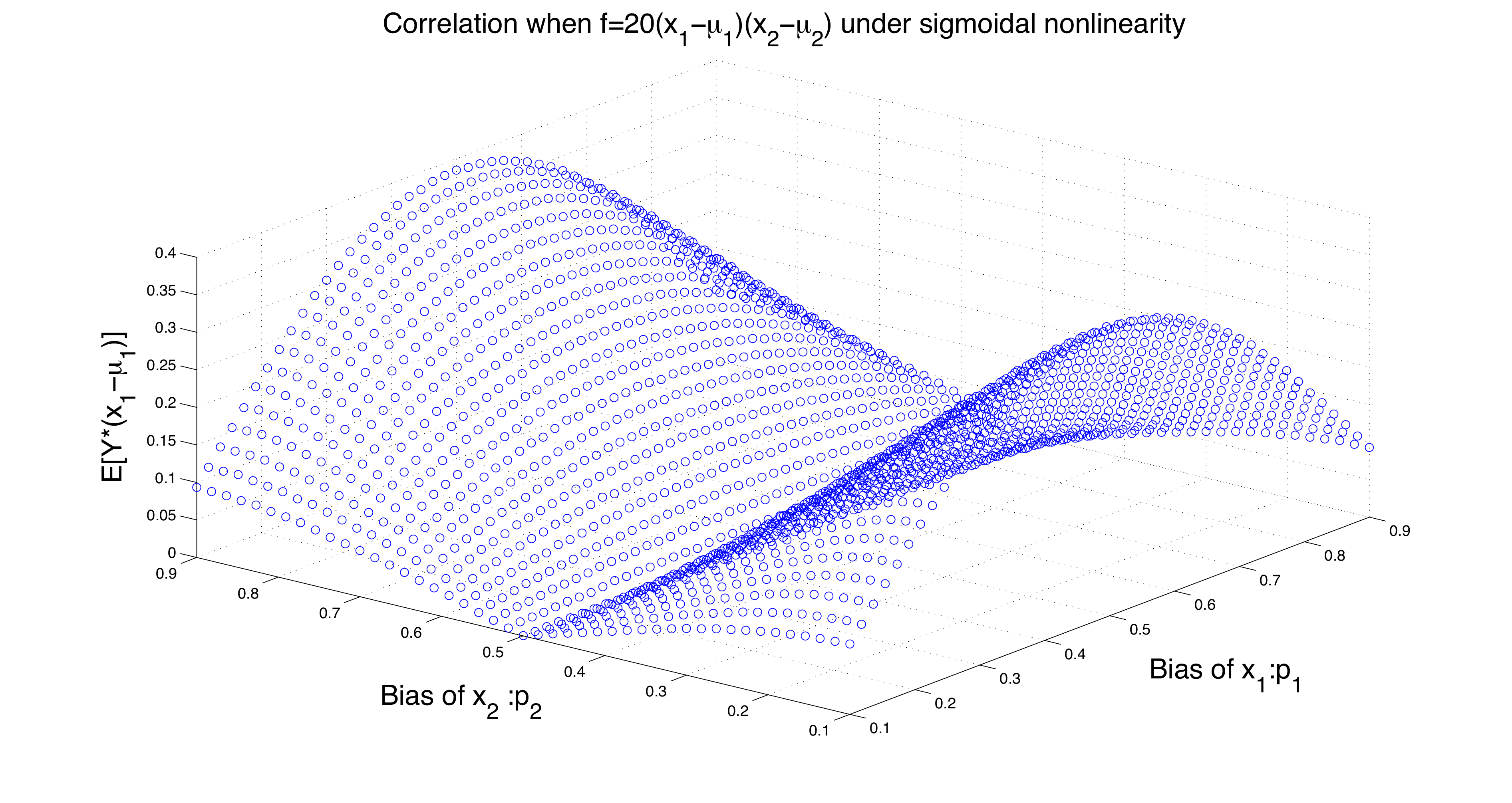}
  \caption{\small Plot of $\E[Y(X_1-\mu_1)]$ %against the bias probabilities $p_1,p_2$ 
  when $Y\sim \sigma(f)$, $f=20(x_1-\mu_1)(x_2-\mu_2)$ and non-linearity $\sigma(f)=\frac{\exp(f)}{1+\exp(f)}$. Notice that non-linearity induces correlation for when $p_1,p_2$ is away from $0,1$ and $1/2$, whereas a linear $\sigma(.)$ would yield zero correlation irrespective of the bias probabilities.}
\label{Graphcorr}
\end{figure}

The quadratic polynomial $f$ is passed through a non-linear function $\sigma(\cdot)$ and we get to observe a noisy version of it (a coin toss with that probability). Here, the non-linearity $\sigma$ \textit{helps} identify the variables using a simple correlation test. If $\sigma (\cdot)$ is linear, as in the above example, it is not possible to detect using a correlation test. This is the key message. We briefly explore \textit{why} non-linearity helps and why it would not work with uniform random variables.  In the next section, we show that for any set of bias probabilities away from $0,1$ and $1/2$, the correlation test succeeds when the unique sign property holds for the polynomial $f$ almost surely for scaling parameter $\gamma \in [0,1]$.

\subsection{Key Idea behind our results: Why does non-linearity help?}\label{whyhelp}
Suppose $\sigma(x)$ is the polynomial $x^q$ for some positive integer $q$. Let $v_1 \ldots v_m$ be the distinct non-zero values taken by $f$. Consider the \textit{influence} of a relevant variable $x_i$ on the value $v_i$: $g(v_i)=\mathrm{Pr}(\mathbf{x}:  f(\mathbf{x})=v_i \lvert x_i=1 )- \mathrm{Pr}(\mathbf{x}:f(\mathbf{x})=v_i \lvert  x_i=-1)$. For an irrelevant variable, all influences are $0$. Then the correlation with $(x_i -\mu_i)$ can be shown to be proportional to: $\gamma^q\sum v_i^q g(v_i)$. If the test fails, then this specific linear combination of influences is $0$. Therefore, one can have non-zero influences and still fail the test. 

 Let us consider a non-linearity $\sigma(\cdot)$ that is analytic over the real line possessing an infinite degree taylor expansion. Now, the correlation with $(x_i-\mu_i)$ is an infinite degree polynomial in $\gamma$, and for the test to fail for all $\gamma$, every coefficient has to be zero or the test fails for only countable values of $\gamma$. The coefficient of the $q$-th degree term is indeed $\sum v_i^q g(v_i)$. This means that the influence variables $g(v_i)$ are constrained by infinite equations, one for every degree $q$. For the test to fail for all $\gamma$, all these equations must be identically $0$.
 
 The unique sign property makes sure that $m$ of those equations are full rank and that there exists an influence variable that is not identically zero. Therefore, the test cannot fail for all $\gamma$. The presence of a strong non-linearity and the scaling parameter (along with the USP property) forces \textit{all} influences to zero which is not possible. We show this formally in Theorem \ref{thm:main1}.
 
\section{Algorithms}
In this section, we outline our broad approach and give the main algorithms we propose. First, we have the following definition of weak and strong supports:
\begin{definition}
The weak support of $f$, denoted by $S$, is defined to be the set of variables that $f$ depends on, i.e., $S= \{ x_i: \exists j ~\mathrm{s.t}~(i,j) \in Q ~\mathrm{or}~ i \in L \}$. Similarly, we define strong support of $f$ to be $\{Q,L\}$.
\end{definition}

In order to achieve time complexity linear in $p$, we divide the objective of learning $f$ into three tasks: 
\begin{enumerate}[itemsep=-2pt]
 \item Learning the weak support.
 \item Learning the coefficients of the linear and quadratic terms formed by the weak support.
 \end{enumerate}
 The last step can be implemented using a log-likelihood optimization in $O(s^2)$ dimensions. In order to achieve sub-quadratic (in $p$) time complexity, it is sufficient to identify the weak support quickly. In fact, it is easy to see once the weak support is identified, the runtime of the other steps will not depend on $p$ but only on $s$. Hence, we focus on identifying the weak support first. 
 
\subsection{A Linear Correlation Test for Binary Variables}
We first consider the model in (\ref{def:Y}) for binary variables. We propose Algorithm \ref{alg:weaksupport} for identifying the weak support which is based on a simple correlation test. Essentially, Algorithm \ref{alg:weaksupport} tests if $\lvert \E[Y(x_k-\mu_k)] \rvert > \epsilon$ or not.  The weak support algorithm runs in time $O(p)$ (Here, $O(\cdot)$ subsumes dimension independent quantities ). 
   
\begin{algorithm}[tb]
\begin{small}
   \caption{Identify Weak Support}
   \label{alg:weaksupport}
\begin{algorithmic}
   \STATE {\bfseries Input:} Data $\mathbf{X}\in \{+1,-1\}^{n\times p}, Y\in \{0,1\}^{n\times 1}$, Threshold $\varepsilon$
   \STATE {Initialize:} $S=\emptyset$.
   \STATE {Estimate $\mu_i=\E[X_i]$ by} $\hat{\mu}_i=\frac{1}{n}\sum_{j=1}^n\mathbf{X}(j,i)$
   \FOR{$i=1$ {\bfseries to} $p$}
   \STATE{Estimate $\rho_i=\E[Y(X_i-\mu_i)]$ by\\ $\hat{\rho}_i=\frac{1}{n}\sum_{j=1}^n Y(j)(X(j,i)-\hat{\mu}_i)$}
   \IF{$\hat{\rho}_i > \varepsilon$} 
   \STATE $S=S\cup \{x_i\}$
   \ENDIF
   \ENDFOR
   \STATE{\bfseries Output:} S
\end{algorithmic}
\end{small}
\end{algorithm}

   \textbf{Sample complexity:}
   Algorithm \ref{alg:weaksupport} works when the number of samples $n$ is large enough, given that $\varepsilon$ is properly chosen. Given an $\varepsilon$ for which the population version succeeds, it can be shown that, if the number of samples is above $8c\log(p) \left(\frac{1}{\varepsilon}\right)^2$ the algorithm sssucceed with polynomially small error probability for any $c>1$. This is due to the fact that the random variable $ Y(X_i-p_i)$ lies in $[-2,2]$ and Hoeffding's inequality can be used to derive strong concentration results. Existence of an $\varepsilon>0$ is shown for the sigmoidal non-linearity while we characterize an $\varepsilon$ for the case when $\sigma$ is piece-wise linear.

\subsection{A Non-linear Correlation Test for Variables with Finite Support}
In the previous sections, we saw intuitively that the non-linearity in the bias of the target variable leaves traces that can be detected through a simple correlation test. We will see in the subsequent sections that this statement is not unconditional: There are certain functions where the correlation test cannot be used to find the relevant variables. In this section, we propose a generalization of the correlation test, which we term as a \emph{non-linear correlation test}. Non-linear correlation test applies a random \emph{non-linear transform} on the features before finding their correlation with the target variable. Hence, the test can exploit \textit{random non-linearity} injected in the test, rather than solely relying on the non-linearity due to nature. Notice that when the variable is binary $\{+1,-1\}$, any nonlinear test is equivalent to some linear test. Therefore variables of interest should be non-binary. Thus, we consider the model in (\ref{def:Y}) for real variables with finite support on the real line. 

Through simulations, first we identify a specific function where the linear correlation test fails to detect one of the variables in the weak support. Later in Section \ref{sec:randomFunction}, we empirically show that the nonlinear correlation test can be used to infer the existence of that variable in the weak support with reasonable number of samples.
\subsubsection{Introducing Randomness through Hashing}
\label{sec:specificFunction}
 We consider variables $x_i$ that take value from a finite set from the alphabet $\mathcal{X} \subset \mathbb{R}$. Let $f$ be a quadratic polynomial over $x_i$'s as in (\ref{def:Y}). We observe $Y \sim \mathrm{Ber} ( \sigma (f(x)))$ where $\sigma$ is a non linearity. The nonlinear correlation test we propose is as follows: First pass $x_i$ through a random non-linearity $Z = g(x_i)$ and perform the correlation test:
 \begin{equation}\label{nonlintest}
  \mathbb{E}\left[Y \frac{(Z- E[Z])}{\mathrm{stddev}(Z)} \right] > \theta.
  \end{equation}
 We choose the non-linearity $g$ to be a randomly chosen hash function (from a hash family) each of which maps the values in $\mathcal{X}$ to the set of integers in $[-U,U]$. Note that, $\mathrm{stddev}(Z)$ is the standard deviation of the transformed random variable $Z=g(x_i)$ for a fixed $g$ over the marginal distribution of $x_i$. Further, the $\mathbb{E}[\cdot]$ is over $Y,\{X_i\}$ and the randomness in choosing the hash $g(\cdot)$. We finally compare the obtained correlation values with some threshold and decide to include it in the weak support. We give a finite sample algorithmic version of this in Algorithm \ref{alg:nonlineartest}.
 
 \begin{algorithm}[tb]
\begin{small}
   \caption{Non-Linear Correlation Test}
   \label{alg:nonlineartest}
\begin{algorithmic}
   \STATE {\bfseries Input:} Data $\mathbf{X}\in \{+1,-1\}^{n\times p}, Y\in \{0,1\}^{n\times 1}$, Threshold $\theta$.
   \STATE  Consider $m$ random hash functions $g_1,g_2 \ldots g_m$ between $\mathcal{X}$ to integers in $[-U,U]$. 
   \STATE {Initialize:} $T=\emptyset$.
     \FOR{$i=1$ {\bfseries to} $p$} 
       \STATE $\mu^{\ell}_{i} \leftarrow \frac{ \sum \limits_{k=1}^{n} g_{\ell} (x_{ki})}{n}$ (empirical mean).
       \STATE $\sigma^{\ell}_i \leftarrow \sqrt { \frac{ \sum \limits_{k=1}^{n} \left(g_{\ell} (x_{ki}) - \mu^{\ell}_{i}\right)^2 }{n-1}  }$ (empirical standard deviation).
      \STATE $C_i \leftarrow \frac{1}{m} \left( \sum \limits_{\ell=1}^m \left ( \frac{\sum \limits_{k=1}^n  y_k  (g_{\ell} (x_{ki}) - \mu^{\ell}_{i}) }{ n\sigma^{\ell}_i} \right) \right)$
      \STATE If $C_i> \theta$, $T \leftarrow T \bigcup {i}$.
   \ENDFOR
   \STATE{\bfseries Output:} $T$.
   \end{algorithmic}
\end{small}
\end{algorithm}

\subsubsection{A simple example:}
Let $X_1,X_2,X_3 \in \{ -2,-1,1,2 \}$. Consider the function $f=\alpha(X_1-\mu_1)(X_2-\mu_2)+\beta (X_3-\mu_3)+\gamma$. Let Y be the binary variable $Y\in\{-1,1\}$ were $\pr{Y=1} = 1/(1+\exp(-f))$. Using a simulation-aided search, we identify a set of coefficients $\alpha,\beta,\gamma$ and a set of marginal distributions for $X_1,X_2,X_3$ such that the correlation $E[(X_1-\mu_1)Y]$ is close to zero. See Appendix \ref{sec:parameterVals} for the selected parameter values. Later, we pass $X_1$ through a non-linearity $Z = g(X_1)$ and perform the correlation test in (\ref{nonlintest}) via Algorithm \ref{alg:nonlineartest}. We choose the non-linearity to be a random hash function that maps the values $\{-2,-1,1,2\}$ uniformly randomly to integers in the range $[-1000,1000]$. We observe that even for 500 samples, in 99 out of 100 trials, $X_1$ has the nonlinear correlation higher than every irrelevant variable. For the number of samples larger than 1000, $X_1$ always has the highest nonlinear correlation.
 
Later, we show empirically that a large fraction of the variables in the weak support can be detected with the non-linear correlation test when the number of samples is large in the experimental section.

\section{Analysis of Correlation Tests - Boolean Case}
\subsection{Analysis: Learning the Weak Support}
In this section, we consider the model in (\ref{def:Y}) with boolean variables. We show the following smoothed analysis result for Algorithm \ref{alg:weaksupport}: under mild assumptions on $f$ (needs to satisfy the unique sign property), weak support can be identified in linear time (in $p$) for almost all scaling parameters $\gamma \in [0,1]$ using a simple correlation test: Compare $E[Y(X_i-\mu_i)]$ with $0$. The test succeeds when $E[Y(X_i-\mu_i)]$ is $0$ when $Y$ is independent of $x_i$ and not zero otherwise. 

The function $f$ depends on $r$ binary variables. Therefore, it can take at most $2^r$ values. Let the distinct \textit{non zero} values that the function takes be $v_1,v_2 \ldots v_m$ (where $m \leq 2^r$). Let $v_{m+1}=0$. This value is used if the function takes value $0$. 

\subsubsection{Sufficient conditions}
 Without loss of generality, let us assume that the function $f$ depends on the first $r$ variables, i.e. $\{x_1 \ldots x_r \}$. We will give conditions on $f$ and $\mu_i$'s such that the correlation with the first variable $\E[Y(x_1-\mu_1)] \neq 0$. 
 
\begin{definition} 
 Define $G=(V,E)$ to be a graph on $r$ vertices associated with the function $f$ such that an edge $\{i,j\} \in E$ if $(i,j) \in Q$. Let $C$ be the connected component in the graph $G$ containing the variable $x_1$.
\end{definition}
\begin{definition}
Function $f$ is said to have a \textit{unique sign property} with respect to a value $v_i$ if there is no $v_j \neq v_i$ such that $v_j=-v_i$ and when $f=v_i$, it fixes the sign of all parities $\{x_ix_j\}_{(i,j) \in Q}, \{x_j\}_{j \in L}$. 
\end{definition}

Another way of stating the unique sign property is by saying $\lvert f \rvert = \lvert v_i \rvert$ fixes the sign of all parities $\{x_ix_j\}_{(i,j) \in Q}, \{x_j\}_{j \in L}$.

We show that the idealized test succeeds when the bias probabilities are away from $0,1$ and $1/2$. There are two mutually exclusive cases in the main result. In one case, we need the function to satisfy the \textit {unique sign property} for one non-zero value and in the other case we need it to satisfy the unique sign property for all its values.  
\begin{theorem}\label{thm:main1}
  Let $p_i \in (\delta,1/2-\delta) \cup (1/2+\delta,1-\delta),~\forall i$ for some $\delta>0$. Let $Y,\mathbf{x}$ follow the model given in (\ref{def:Y}). Let $\sigma(\cdot)$ be the sigmoidal function. Then, $E[Y(X_1-\mu_1)] =0$ when $Y$ is independent of $X_1$. Also, $E[Y(X_1-\mu_1)] \neq 0$ for all but finite values of $\gamma \in [0,1]$ when $Y$ is dependent on $X_1$ and when one of the following conditions are satisfied:
  \begin{enumerate}
      \item $f$ has a linear term in $x_1$ or has a linear term $x_i$ which is in the same component $C$ in $G$ as $x_1$ . $\exists v_i \neq 0:$ such that $\lvert f \rvert = \lvert v_i \rvert$ implies a \textit{unique sign} for all the parities $\{ x_ix_j \}_{(i,j) \in Q},\{x_j\}_{j \in L}$.
      \item $f$ has no linear term in $x_1$ and $C$ contains no variable with a linear term. $\forall$ values $v_i \neq 0:\lvert f \rvert = \lvert v_i \rvert$ implies a \textit{unique sign} for all the parities $\{ x_ix_j \}_{(i,j) \in Q},\{x_j\}_{j \in L}$.
  \end{enumerate}
\end{theorem}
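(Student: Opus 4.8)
The plan is to reduce the correlation to a single analytic function of $\gamma$ and then argue that this function cannot vanish identically on $[0,1]$.

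First I would condition on $X_1$. Writing $\mu_1 = 2p_1-1$ and using $\E[Y\mid \mathbf{x}] = \sigma(\gamma f(\mathbf{x}))$, a direct computation gives
\[
\E[Y(X_1-\mu_1)] = 2p_1(1-p_1)\Big(\E[\sigma(\gamma f)\mid X_1=1] - \E[\sigma(\gamma f)\mid X_1=-1]\Big).
\]
Since $p_1 \in (\delta,1-\delta)$, the prefactor is strictly positive, so vanishing of the correlation is governed entirely by the bracketed difference. Expanding over the distinct nonzero values $v_1,\dots,v_m$ and $v_{m+1}=0$ of $f$, and introducing the influences $g(v_i) = \Pr[f=v_i\mid X_1=1]-\Pr[f=v_i\mid X_1=-1]$, this difference equals $\Phi(\gamma) := \sum_{i=1}^{m+1} g(v_i)\,\sigma(\gamma v_i)$. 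The independent case is then immediate, since $\E[Y(X_1-\mu_1)]=\E[Y]\,\E[X_1-\mu_1]=0$ (equivalently every $g(v_i)=0$).

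For the dependent case, the structural fact I would use is that $\gamma\mapsto\sigma(\gamma v_i)$ is real-analytic on all of $\mathbb{R}$, hence $\Phi$ is real-analytic in $\gamma$; a real-analytic function on $[0,1]$ is either identically zero or has finitely many zeros, so it suffices to show $\Phi\not\equiv 0$. I would establish this by reading off Taylor coefficients at $\gamma=0$. Because $\sigma(x)-\tfrac12$ is odd, $\sigma$ has a nonzero constant term and nonzero coefficients only at odd degrees; combined with $\sum_i g(v_i)=0$ this yields $\Phi(\gamma)=\sum_{k\ge 0} a_k\,\gamma^{2k+1}S_{2k+1}$ with $a_k\neq 0$ and $S_q := \sum_{i=1}^m g(v_i)v_i^{q}$. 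Thus the theorem reduces to exhibiting a single odd $q$ with $S_q\neq 0$. To understand when this fails, I would write $S_{2k+1}=\sum_i (g(v_i)v_i)(v_i^2)^k$ and group values by their distinct squares; a Vandermonde argument then shows that all odd moments vanish if and only if $g(v)=0$ on every value whose negative is \emph{not} a value and $g(v)=g(-v)$ on every $\pm$ pair. This is exactly where the ``no $v_j=-v_i$'' clause of the unique sign property enters. In Case 1 the USP value $v_i$ is unpaired, so all-moments-zero would force $g(v_i)=0$; I would contradict this using sign-fixing together with the linear term, since $|f|=|v_i|$ pins a unique sign pattern of all parities and a linear term in the component $C$ pins the global sign of $C$, forcing a definite value of $x_1$ on $\{f=v_i\}$ and hence $g(v_i)\neq 0$. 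In Case 2 there are no $\pm$ pairs at all, so all-moments-zero collapses to $g\equiv 0$; here I would use that, with no linear term in $C$, flipping every variable of $C$ is a symmetry of $f$, reducing $g(v)\neq 0$ to a nondegeneracy of the biases and invoking that they are bounded away from $1/2$.

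The main obstacle is this influence-nonvanishing step, not the analytic reduction. The delicate point is that dependence of $Y$ on $X_1$ does \emph{not} by itself force $\Phi\not\equiv 0$: when $f$ has a $\pm$ pair of values and the biases conspire (for instance $p_k$'s whose logits $\log\frac{p_k}{1-p_k}$ cancel across a component, as for $f=\beta x_1x_2+\beta x_1 x_3$ with $p_2+p_3=1$), the conditional law of $f$ is identical for $X_1=\pm 1$ and every $g(v_i)$ vanishes even though $x_1$ is relevant. This is precisely the configuration USP rules out, and it explains why both the ``no negative value'' and ``sign-fixing'' clauses, together with biases away from $0,1,1/2$, are essential. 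Concretely, in Case 2 I expect the crux to be proving that among the parity patterns realized by the nonzero values at least one is not orthogonal to the vector of component logits; carrying out this combinatorial nondegeneracy cleanly, and separating it into the linear-term and no-linear-term regimes, is where the real work lies.
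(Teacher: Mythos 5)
Your reduction---conditioning on $X_1$ to extract the prefactor $2p_1(1-p_1)$, passing to the influences $g(v_i)$, invoking real-analyticity to replace ``nonzero for all but finitely many $\gamma$'' by ``not identically zero'', reading off the odd Taylor coefficients of $\sigma$ at $0$, and characterizing the null space of the odd-power Vandermonde system as generated by $\pm$ pairs of values---is exactly the paper's argument, and your Case~1 (the USP value is unpaired, and sign propagation through the component pins $x_1$, giving $g(v_i)=\Pr(f=v_i\mid x_1=+1)>0$) matches the paper's Cases 1 and 1$'$. The genuine gap is the one you flag yourself: in Case~2 you reduce the claim to showing that some \emph{realized} parity pattern of $C$ is not orthogonal to the vector of logits $\log\frac{1-p_j}{p_j}$, but you do not prove this, and ``biases bounded away from $1/2$'' alone does not deliver it---each individual logit is bounded away from $0$, yet a particular signed sum of them can vanish exactly (your own $p_2+p_3=1$ example is precisely such a cancellation). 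Without controlling \emph{which} sign patterns of $C\setminus\{1\}$ actually occur among the values $v_1,\dots,v_m$, the nonvanishing of some $g(v_i)$ does not follow.

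The paper closes this in two steps. First, an $\mathbb{F}_2$ linear-algebra argument over a spanning tree of $C$ shows that, under the unique sign property for all values, the uniform distribution on the function's values induces the \emph{uniform} distribution on all $2^{|C|-1}$ sign patterns of the variables in $C\setminus\{1\}$; in particular every pattern is realized. Second, the Littlewood--Offord theorem applied to $\sum_{j\in C\setminus\{1\}}\log\bigl(\tfrac{1-p_j}{p_j}\bigr)b_j$, with each coefficient of magnitude at least $\delta'>0$, shows that at most a $\binom{n}{\lfloor n/2\rfloor}2^{-n}<1$ fraction of the patterns yields a small sum, so at least one (in fact more than half) of the values $v_i$ satisfies an explicit positive lower bound on $\lvert g(v_i)\rvert$, hence $g(v_i)\neq 0$. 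For the purely qualitative statement of this theorem you could replace the anticoncentration step by the observation that two realized patterns differing in one coordinate have logit sums differing by $2\log\frac{1-p_j}{p_j}\neq 0$, so not all realized sums can be zero---but the realizability/uniformity step is indispensable and is the missing ingredient in your sketch.
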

\begin{proof}
 The proof is relegated to the Appendix. The proof follows the basic outline provided in Section \ref{whyhelp}.
 \end{proof}
 \textit{Biases need to be away from $1/2$:} We point out that if all bias probabilities are $1/2$, under the conditions in Case $2$, the test does not succeed. This is because the quantity that appears in the proof, i.e. $g(v_i)$ is $0$ for all $v_i$. This means that the bias probabilities being away from $1/2$ is essential for the final result.

\textbf{Note:} The condition that all the coefficients are in general position, i.e.  $c s_c+  \sum \limits_{(i,j)\in Q} \beta_{i,j} s_{i,j} + \sum \limits_{j \in L}\alpha_{j} s_{j} \neq 0$ for any set of signs $s_{i,j},s_j,s_c \in \{0,1,-1\}$, ensures the unique sign condition stated for all values of the function in the Case $2$. For Case $1$, we have used the fact that there is no $j:v_j=-v_i$. This can be ensured if the constant $c$ is in general position with respect to other coefficients $\{\beta_{i,j} \}, \{\alpha_j \}$. In fact, we only need, $c \neq \sum \limits_{(i,j)\in Q} \beta_{i,j} s_{i,j} + \sum \limits_{j \in L}\alpha_{j} s_{j}$ for any set of signs $s_{i,j},s_j \in \{0,1,-1\}$.

\textbf{Remark:} The above theorem works for a broad class of non-linearities $\sigma (\cdot)$ that is complex analytic and has more than $2^s$ non zeros terms in its taylor expansion. Any transcendental function is a special case that satisfies our properties. 

\begin{figure*}[ht!]
\centering
\begin{subfigure}[b]{0.3\textwidth}{\label{fig1.1}\includegraphics[width=\textwidth]{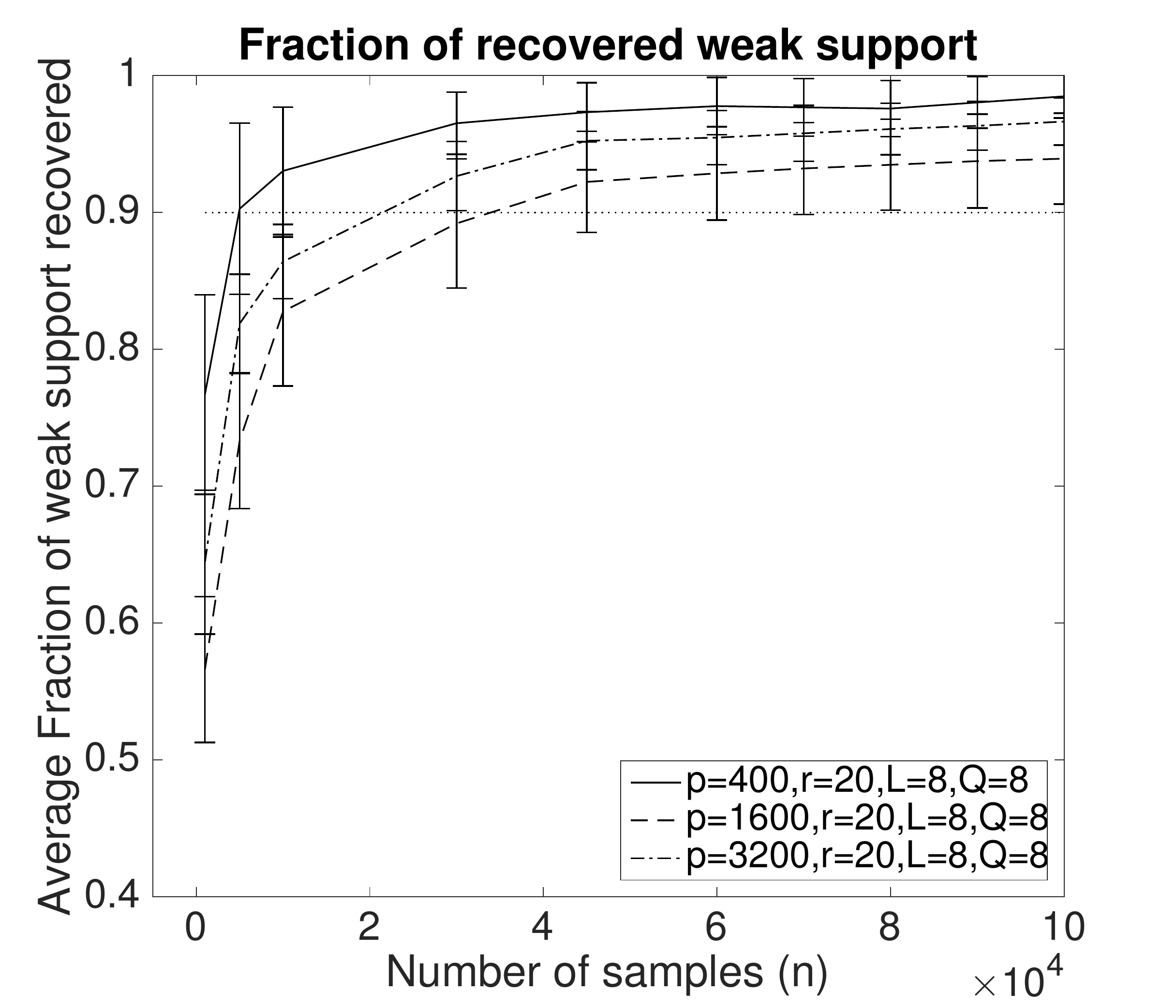}}
\end{subfigure}
\begin{subfigure}[b]{0.3\textwidth}{\label{fig1.2}\includegraphics[width=\textwidth]{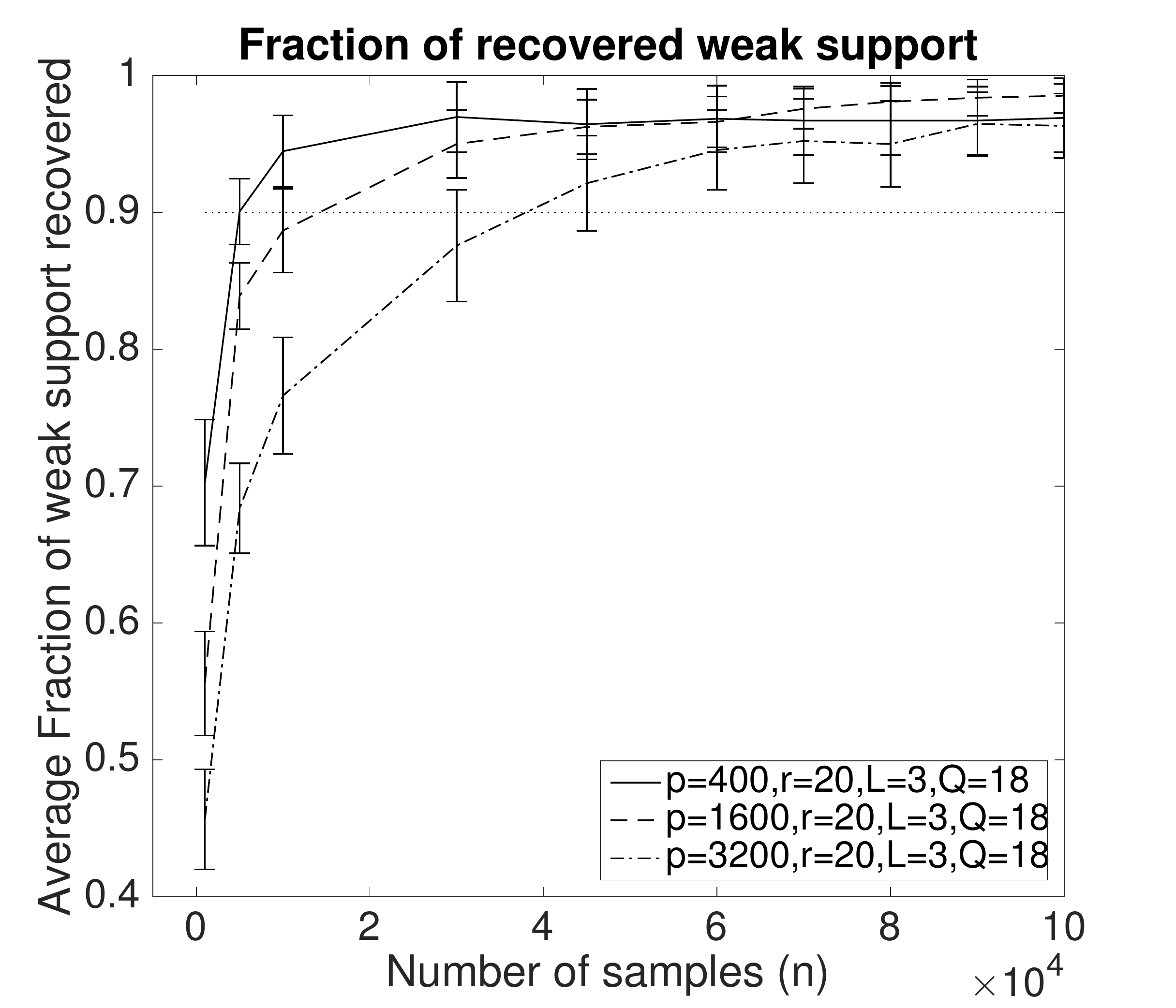}}
\end{subfigure}
\begin{subfigure}[b]{0.3\textwidth}{\label{fig1.3}\includegraphics[width=\textwidth]{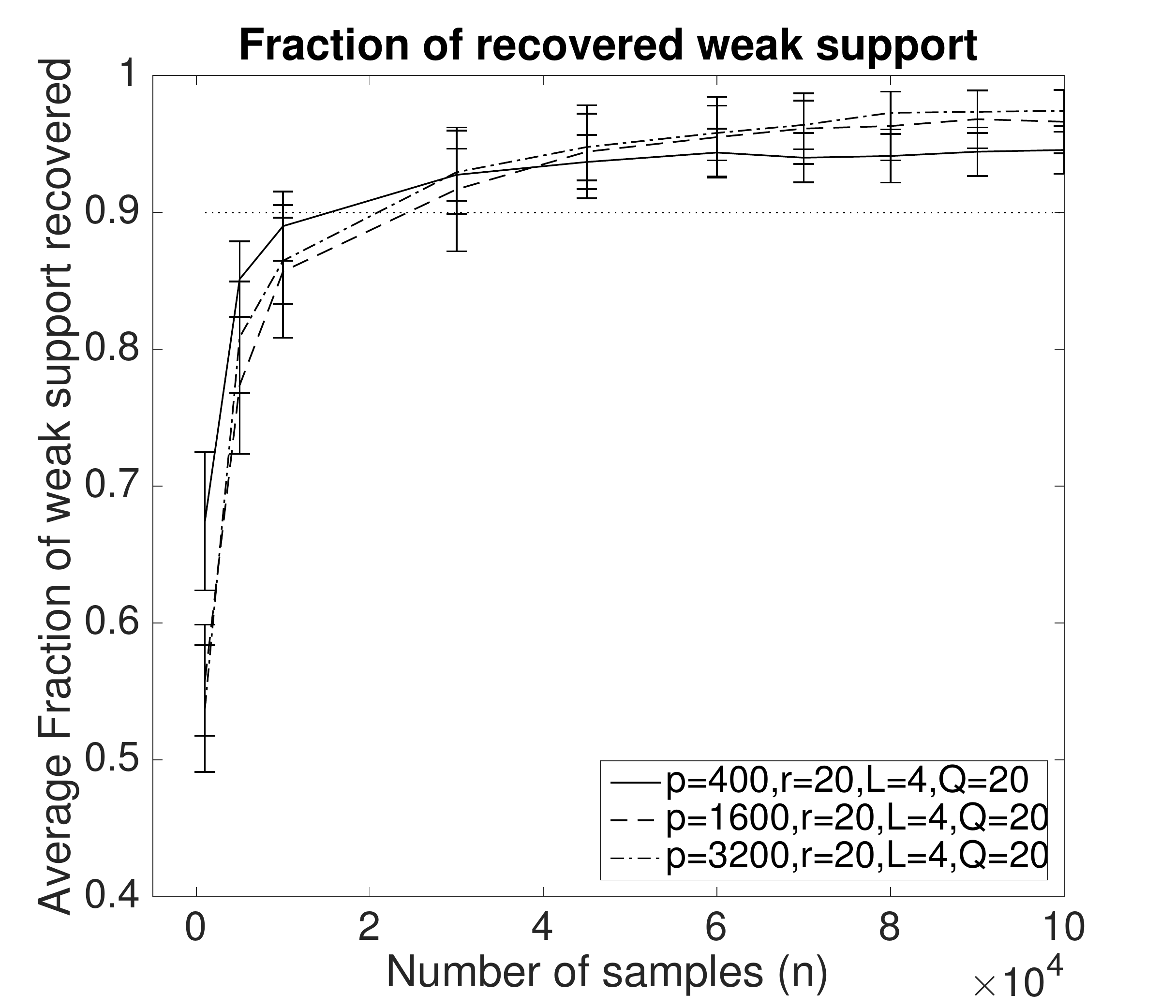}}
\end{subfigure}
\caption{Fraction of times the weak support (relevant variable set) is recovered. $Y,\mathbf{x}$ follow the sparse quadratic logistic regression model in (\ref{def:Y}) with $\sigma(\cdot) = \frac{\exp(\cdot)}{1+\exp(\cdot)}$. The plots show that average fraction of the weak support recovered averaged over the random choices of the scaling parameter $\gamma$ and the bias-probabilities.  The $90 \%$ line for $R=20$ variables refers to the event: All but at most $2$ variables in the weak-support are recovered. The ratio of $L/Q$ varies and $p$ varies from $400-3200$. The number of samples for these cases is $\sim 10^4$. This illustrates that we need significantly fewer samples compared to what our theory predicts (Theorem \ref{samplecomp}). }
\label{fig:weakaccuracy1}
\end{figure*}

\section{Weak Support Learning: Finite Sample Complexity Results}
In this section, we again consider boolean variables. We will show that when $Y$ depends on $x_1$ and when $f$ satisfies some technical conditions , $\lvert \mathbb{E}[Y(x_1-\mu_1)] \rvert \geq \varepsilon$ for values of $\gamma$ in a set of measure which is significant in an interval on the real line. Here, we characterize $\varepsilon$ as a function of other parameters. This establishes sample complexity requirements for Algorithm \ref{alg:weaksupport}. We prove this result when the non-linearity $\sigma(\cdot)$ is a  piecewise linear function. 
\begin{align}\label{nonlinear}
 \sigma(x) & = \frac{1}{2}+\frac{1}{2} L(x) \nonumber \\
  L(x)        & = \left \{ 
    \begin{array}{c}
      1,~~~~~~~ x \geq 1 \\
      x, ~~~ -1 \leq x \leq 1 \\
     -1,~~~~~~~ x \leq -1
    \end{array}    
  \right\}
\end{align}
The rest of the assumptions remain the same as in ($\ref{def:Y}$). Let $v_1, \ldots v_m$ be the non zero values the function $f$ takes. For the purpose of the following theorem, let $\delta^{''}=\delta^{2r} \min (2^{\delta'/2}-1, 1- 2^{-\delta'/2})$ where $\delta'= \left\lvert \log \left(\frac{2}{1-2\delta}\right)\right\rvert$ for some constant $\delta >0$.

\begin{theorem}\label{samplecomp}
 Let the non-linearity $\sigma$ be the function in $(\ref{nonlinear})$.
 Assume that the coefficients of the function $f$ satisfy the following stronger general position property: $ \lvert \sum \limits_{(i,j) \in Q} \beta_{i,j} s_{i,j} + \sum \limits_{j \in L} \alpha_j s_j + c s_c \rvert > \epsilon$ for all set of signs 
 $s_{i,j},s_j,s_c \in \{0,+1,-1\}$. Further, let the bias probabilities $p_i \in (\delta,1/2-\delta) \cup (1/2+\delta, 1- \delta)$. When the function is independent of $x_1$, $\lvert \E[Y(X_1-\mu_1)] \rvert =0$. When the function is dependent on $x_1$, we have the following cases:
 \setlist{nolistsep}
\begin{enumerate}
 \item If the component $C$ containing $x_1$ in graph $G$ has a linear term in $f$, then $\lvert \E[Y(X_1-\mu_1)] \rvert  >   C_1 \frac{\epsilon^2 \delta^{r+2}}{b^2s^2} $ for $\gamma$ in a set of measure $C_2\frac{m\epsilon}{b^2s^2} $ in the open interval $(0, \frac{1}{\lvert v_m \rvert}) \subseteq (0, \frac{1}{\epsilon})$, where $C_1=\frac{1}{32}$ and $C_2=\frac{3}{8}$
 \item If the component $C$ containing $x_1$ in graph $G$ has no linear term in $f$, then $\lvert \E[Y(X_1-\mu_1)] \rvert  >   C_1 \frac{\epsilon^2 \delta^{2}\delta^{''}}{b^2s^2} $ for $\gamma$ in a set of measure $C_2\frac{m\epsilon}{b^2s^2} $ in the open interval $(0, \frac{1}{\lvert v_m \rvert}) \subseteq (0, \frac{1}{\epsilon})$, where $C_1=\frac{1}{32}$ and $C_2=\frac{3}{32}$.
\end{enumerate}
\end{theorem}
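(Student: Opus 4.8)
The plan is to reduce the correlation to a one-dimensional, piecewise-linear function of the scaling parameter $\gamma$ and then exploit its kinks. First I would condition on $X_1$. Writing $q_1 = 1-p_1$ and setting $g(v_i) = \Pr(f = v_i \mid X_1 = 1) - \Pr(f = v_i \mid X_1 = -1)$ for the influences introduced in Section \ref{whyhelp}, the decomposition $\sigma = \tfrac12 + \tfrac12 L$ together with $\sum_i g(v_i) = 0$ gives, after a short computation,
\begin{equation*}
\E[Y(X_1 - \mu_1)] = p_1 q_1 \, \phi(\gamma), \qquad \phi(\gamma) := \sum_{i} L(\gamma v_i)\, g(v_i),
\end{equation*}
since the constant $\tfrac12$ is annihilated by comparison against $\mu_1$ and the $p_1 q_1$ prefactor already contributes a $\delta$-factor because $p_1 \in (\delta,1-\delta)$. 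Because $L$ is the clamp in $(\ref{nonlinear})$, $\phi$ is piecewise linear in $\gamma$ with breakpoints exactly at $\gamma_i = 1/\lvert v_i\rvert$: as $\gamma$ crosses $\gamma_i$ the term $L(\gamma v_i)$ saturates at $\mathrm{sign}(v_i)$ and the slope of $\phi$ jumps by $-v_i g(v_i)$, so on each piece the slope is the partial sum $\sum_{i:\,\gamma<\gamma_i} v_i g(v_i)$. The strong general-position hypothesis forces the nonzero $v_i$ to be distinct, to satisfy $\lvert v_i\rvert > \epsilon$, and to be mutually separated by $\gtrsim\epsilon$; this keeps every breakpoint inside $(0,1/\epsilon)$ and, via $\gamma_i-\gamma_j=(\lvert v_j\rvert-\lvert v_i\rvert)/(\lvert v_i\rvert\lvert v_j\rvert)$ with $\lvert v_i\rvert\le bs$, makes the linear pieces have length $\gtrsim \epsilon/(b^2 s^2)$ — the source of the $\epsilon/(b^2 s^2)$ scale in the claimed measure.

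The engine of the proof is a single influence lemma: under the hypotheses there is a value $v^\ast$ with $\lvert g(v^\ast)\rvert \ge \eta$, where $\eta$ is case-dependent, and this is where the two cases and the biases-away-from-$1/2$ assumption enter. When the component $C$ of $x_1$ carries a linear term, the global sign-flip symmetry $\mathbf{x}_C \mapsto -\mathbf{x}_C$ of the quadratic part is broken, an extreme value of $f$ has a sign-determined preimage by the unique-sign property \emph{for that one value}, and conditioning on $X_1$ shifts its probability in a definite direction; since every bias lies in $(\delta,1-\delta)$ this yields $\eta \gtrsim \delta^{r}$ (Case 1). When $C$ has no linear term the quadratic part is even, preimages come in sign-flipped pairs, the first-order effect cancels, and one must rely on the second-order asymmetry created by biases bounded away from $1/2$; invoking the unique-sign property \emph{for all} values then gives the weaker $\eta \gtrsim \delta^{2r}\min(2^{\delta'/2}-1,\,1-2^{-\delta'/2})\asymp\delta''$, exactly the quantity in the theorem.

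To convert a single nonzero influence into a measure bound, note that the slope of $\phi$ jumps by $-v^\ast g(v^\ast)$, with $\lvert v^\ast g(v^\ast)\rvert \gtrsim \epsilon\,\eta$, across $\gamma^\ast = 1/\lvert v^\ast\rvert$, so $\phi$ cannot be flat there. On whichever side of $\gamma^\ast$ has the steeper slope, $\phi$ departs from its value at $\gamma^\ast$ at rate $\gtrsim \epsilon\eta$, while $\phi$ is globally Lipschitz with constant $\sum_i \lvert v_i g(v_i)\rvert \le 2bs$. A line of slope $\gtrsim \epsilon\eta$ over a piece of length $\gtrsim \epsilon/(b^2 s^2)$ stays above a threshold $\varepsilon_0$ of order $\epsilon^2\eta/(b^2 s^2)$ on a sub-interval of length again of order $\epsilon/(b^2 s^2)$; multiplying through by $p_1 q_1\ge\delta(1-\delta)$ recovers value lower bounds of the form $\epsilon^2\delta^{r+2}/(b^2 s^2)$ and $\epsilon^2\delta^2\delta''/(b^2 s^2)$ together with a measure of order $\epsilon/(b^2 s^2)$, matching the theorem.

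The main obstacle is the influence lemma, and specifically Case 2: once the even symmetry kills the first-order term, proving $\lvert g(v^\ast)\rvert \ge \eta$ requires quantifying how biases away from $1/2$ break the balance between a configuration and its global sign flip, which is precisely where the delicate $\min(2^{\delta'/2}-1,\,1-2^{-\delta'/2})$ factor arises. A secondary difficulty is ensuring that the kink at $\gamma^\ast$ is not cancelled by the other $m-1$ breakpoints accumulating nearby; this is controlled by the general-position separation of the $v_i$, which keeps the breakpoints apart and bounds the number of pieces by $m \le 2^r$.
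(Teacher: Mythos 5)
Your reduction to the piecewise-linear function $\phi(\gamma)=\sum_i L(\gamma v_i)g(v_i)$, the location of the breakpoints at $1/\lvert v_i\rvert$, and the use of the general-position separation to lower-bound the piece lengths by $\epsilon/(b^2s^2)$ all match the paper's argument. But there is a genuine quantitative gap in how you convert influences into a measure bound: you build the whole estimate on a \emph{single} value $v^\ast$ with $\lvert g(v^\ast)\rvert\ge\eta$ and the single kink at $\gamma^\ast=1/\lvert v^\ast\rvert$, which can only certify one good linear piece and hence a good-$\gamma$ set of measure $\Theta\bigl(\epsilon/(b^2s^2)\bigr)$. The theorem claims measure $C_2\,m\epsilon/(b^2s^2)$, i.e.\ $\Theta(m)$ good pieces, and your sketch does not "match the theorem" as you assert. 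To get the factor of $m$ you must show that a constant fraction of \emph{all} the influences are large: in Case 1 the strong general-position property gives the unique-sign property for \emph{every} value, so each $g(v_i)$ is a single signed atom probability and $\lvert g(v_i)\rvert\ge\delta^r$ for all $i$; in Case 2 the paper applies a Littlewood--Offord anti-concentration bound to the random signs $1-2q^+_j$ (uniform over the function's values) to show at least half the $\lvert g(v_i)\rvert$ exceed $\delta''$. One then argues on the slope differences $b_i-b_{i+1}$ that whenever $g'(v_i)$ is large at least one of the two adjacent slopes is large, so a constant fraction of the $m+1$ pieces each contribute $\gtrsim\epsilon/(b^2s^2)$ of good measure.

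Relatedly, you correctly identify the Case 2 influence bound as the main obstacle and name the right quantity $\delta''=\delta^{2r}\min(2^{\delta'/2}-1,1-2^{-\delta'/2})$, but you leave it as an obstacle rather than resolving it. The resolution in the paper is not a "second-order asymmetry" expansion: one writes $g(v_i)$ as a common factor times $e^{\sum_{j\in C\setminus\{1\}}\log(\frac{1-p_j}{p_j})(1-2q^+_j)}-1$, shows that the signs $(1-2q^+_j)$ are jointly uniform when a value of $f$ is drawn uniformly (via a full-rank linear system over $\mathbb{F}_2$ on a spanning tree of $C$), and then invokes Littlewood--Offord with the coefficient lower bound $\lvert\log(1/p_j-1)\rvert>\delta'$ to conclude the exponent avoids a window around $0$ for at least half the values. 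This single lemma is what simultaneously supplies your missing influence bound and the missing factor of $m$ in the measure.
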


\begin{proof}
 A full proof appears in the Appendix.  We give a brief sketch of the proof. The correlation with a relevant variable $x_i$ results in a piecewise linear function in the scaling parameter $\gamma$. Therefore, if the slope of most of these piecewise linear parts can be shown to be large, then a significant portion of the curve is away from the axis and implies a lower bound on the absolute value of the correlation. Although $\sigma$ is not analytic, we get a Vandermonde-like system of equations connecting influence variables ($g(v_i)$ as in Section \ref{whyhelp}) to the slopes of the piecewise linear parts. Using a looser version of the unique sign property, one can show that a significant fraction of the influence variables are large in magnitude proving the result.
 \end{proof}

\section{Experiments}
\subsection{Synthetic Experiments: Weak Support Recovery }
 We illustrate the effectiveness of the correlation test in Algorithm \ref{alg:weaksupport} using synthetic experiments run on models that obey the setup in (\ref{def:Y}). We describe the experimental setup in Fig. \ref{fig:weakaccuracy1}. We fix the quadratic polynomial $f$ where $\lvert L \rvert$ linear terms are uniformly chosen from a fixed set of $r$ variables. Similarly, $\lvert Q \rvert$ bilinear terms are chosen uniformly randomly from the terms formed by $r$ variables. The coefficients of the quadratic polynomial is chosen from a uniform distribution over $[0.1,1]$. The number of relevant variables is $r=20$ and $s \sim 15-25$ for the experiments.  For this fixed polynomial, we randomly generate $\gamma$ uniformly sampled from $[1,15]$ and for every gamma we generate several bias-probabilities uniformly randomly distributed in $[0.1,0.4] \bigcup [0.6,0.9]$. Here, $\delta$ is taken to be $0.1$. For a fixed polynomial, we average over $~140$ random initializations of $\gamma$ and bias-probabilities. We average over $20$ iterations with random choices of polynomials.  We compute the fraction of times the weak support is recovered which is the ratio of the size of the part of the support set recovered to the the total size of weak support

 \textit{Key Observations:} The $90 \%$ line for $R=20$ variables refers to the event: All but at most $2$ variables in the weak-support are recovered. In Fig. \ref{fig:weakaccuracy1}, we find that for different ratios of $\lvert L \rvert$ and $\lvert Q \rvert$, roughly $~10^4$ samples are required to cross this threshold when $p$ is in a few thousands. This is much smaller than the finite sample complexity results in Theorem \ref{samplecomp}. This suggests that in practice, the correlation tests work with fewer samples than what is predicted by the theory. 
 
 \subsection{Experiments with real data-sets: Weak Support Recovery}
 \textbf{Objective:} We evaluate the effect of adding quadratic terms formed by the binary features in the weak support (of small fixed sparsity) identified by Algorithm \ref{alg:weaksupport} to existing binary features in a real data set. We show that this boosts the performance of a standard classifier that works with a linear function of the binary features (standard logistic regression and Linear SVM classifier).
 
 \begin{table*}[ht]
\centering
\begin{tabular}{|clc|c|c|}
\hline
 Classifier & AUC Score  & Sparsity after CV & Running Time \\
 \hline
 \textit{Logistic + $\ell_1$  } & 0.890357 & N/A &  6.197472s \\
 \textit{Logistic + $\ell_1$ + quad terms} & \textbf{0.909252} & 8 & 12.022856s  \\
 \textit{LSVM + $\ell_1$} & 0.822040 & N/A & 3.986561s \\
 \textit{LSVM+ $\ell_1$ +quad terms } & \textbf{0.881608} & 10 & 32.880251s \\
 \hline
\end{tabular}
 \caption{Comparison of standard classifiers on the \textit{Dorothea} dataset with and without addition of interaction terms using our correlation test: We take two standard classifiers -Linear SVM and Logistic Regression both with $\ell_1$ regularization. We train them on the binary feature matrix as is. Subsequently, we use the correlation test from Algorithm \ref{alg:weaksupport} to identify weak support (controlled by a sparsity parameter $k$). We create new features by multiplying binary features in the weak support pairwise and train the classifiers again to compare.} 
 \label{my-table}
\end{table*}

 \textbf{Dataset:} We performed our experiments on the \textit{Dorothea} data set obtained from the UCI Machine Learning Repository.  \footnote{The URL for this data set is: \textit{https://archive.ics.uci.edu/ml/datasets/Dorothea}. This data set was contributed to this repository by DuPont Pharmaceuticals Research Laboratories and KDD Cup 2001.} This data set arises from a use case in drug design. In drug design, one wants a compound that can efficiently bind to a target receptor for the drug to be effective. The data set contains 1150 samples (800 train + 350 validation samples).  Each sample is a set of $100000$ binary features of a compound. The data set has about $50000$ random features added to $50000$ genuine ones. The target is a binary variable indicating if the compound binds or not. %Additional test samples were also available. However, the target labels for the test samples are not to be found on the website or other links provided. 
 We used the training samples for cross validation for our models while we tested on the validation set and we report the test AUC (Area under the Receiver Operating Characteristic Curve (ROC)) scores.  
 
\textbf{Algorithms Compared:} We first perform a correlation test using Algorithm \ref{alg:weaksupport} and then we add all possible quadratic terms arising from the output weak support\footnote{When we do a correlation test using Algorithm \ref{alg:weaksupport}, we normalize the correlation value by the standard deviation of that binary feature. Then, we rank the obtained normalized correlation values and pick the top $k$ features.} to the existing features. We train a standard classifier that works with these expanded set of features. We compare the performance of the same classifier run without any feature addition. The regularization parameters $C$ in all our experiments used for cross validation is such that $\log_{10} (C)$ takes $15$ uniformly space values in the interval $[-4,4]$. We do a $4$ fold cross-validation at the training stage. More specifically, we compare the following four algorithms: 
 \begin{enumerate}[noitemsep]
  \item \textit{LSVM + $\ell_1$ regularization}: This is a standard linear support vector machine with $\ell_1$ regularization. 
  \item \textit{LSVM + $\ell_1$ regularization + quadratic terms}: Quadratic terms arising out of the $k$ top features according to the correlation test are added. For cross-validation, $k$ is chosen to lie in $\{3,5,8,10,15\}$.
  \item \textit{Logistic Regression + $\ell_1$ regularization } This is the standard logistic Regression with $\ell_1$ regularization. 
  \item \textit{Logistic Regression +  $\ell_1$ regularization + quadratic terms}: Quadratic terms arising out of the $k$ top features according to the correlation test are added. For cross-validation, $k$ is chosen to lie in $\{3,5,8,10,15\}$.
 \end{enumerate}
 \textbf{Results:} We summarize the results in Table \ref{my-table}. We find a $0.02$ improvement in the AUC score for logistic regression and a $0.06$ improvement in the AUC score for Linear SVM. The running times with or without feature addition are comparable. We like to note that adding extra features to an already large feature matrix is a time consuming operation and time required for this step is included in the timing analysis.

\subsection{Synthetic Experiments: Nonlinear Correlation Test for Variables with Finite Support}
\label{sec:randomFunction}
In this section, we provide empirical evidence that the non-linear correlation (Algorithm \ref{alg:nonlineartest}) identifies the weak support of a function dependent on real variables with finite support. We consider the following randomly chosen function: $f=\sum_{(i,j\in S\times S)}\alpha_{i,j}X_iX_j$, where $X_i\in\mathcal{X}=\{-2,-1,1,2\},\forall i$. The uniformly randomly chosen subset $S\subseteq [p]$ is the weak support of $f$, and $\alpha_{i,j}$ are i.i.d samples from a uniform random variable in the interval $[-1,1]$. The probability mass function of each variable $X_i$ is chosen uniformly randomly over the simplex in 4 dimensions. Hence, $f$ has all the quadratic terms, where the coefficient of each term is selected uniformly and independently in the range $[-1,1]$. We consider the case $p=1010$, and $\lvert S\rvert=10$.  We perform the hashing-based nonlinear correlation test to obtain $C_i$ given in Section \ref{sec:specificFunction}. Later, we choose the candidate weak support set $T$ by including every variable $X_i$ which is among the top 20 variables with highest  correlation values $C_i$. We compute the fraction of the true weak support that is contained in $T$, i.e., $\frac{\lvert S\cap T \rvert}{\lvert S \rvert}$. This fraction is averaged over 100 randomly sampled functions and datasets, and is reported in Table \ref{table:randomFunction} for varying number of samples. As observed, with enough samples, we can recover a large fraction of the variables in the weak support on average. Increasing the number of used hash functions improves the performance, although it incurs some computational cost. 

 \begin{table}[t]
 \centering
\caption{Average fraction of the weak support recovered vs samples for the non-linear correlation test. Nonlinear correlation of each variable is the average of the correlation obtained using 10 random hashes. Weak support recover rate is the fraction of the weak support that is in the top 20 variables with highest nonlinear correlation.} 
%\centering
\begin{tabular}{|clc|c|c|c|}
\hline
 Samples & $500$ & $1000$ & $5000$ & $10000$ \\
 \hline
 WS  &  &  & &   \\
  Rec. Rate  & $0.346$ & $0.488$  & $0.732$  & $0.822$  \\
 \hline
\end{tabular}
\label{table:randomFunction}
\end{table}

\section{Conclusion}
 We propose correlation tests to recover the set of relevant variables for the sparse quadratic logistic regression problem with real covariates with finite support. When the variables are all binary, the correlation test is a simple linear correlation test. We show that the non-linearity inherent in the problem helps the correlation test to succeed. Further, we propose a nonlinear correlation test that involves transforming covariates through hashing before performing correlation for the non-binary case. We show the effectiveness of our methods through a number of theoretical as well as experimental results.
 
\newpage
\bibliographystyle{plain}
\bibliography{quadbib}

\begin{thebibliography}{10}

\bibitem{agarwal2014scalable}
Alekh Agarwal, Alina Beygelzimer, Daniel~J Hsu, John Langford, and Matus~J
  Telgarsky.
\newblock Scalable non-linear learning with adaptive polynomial expansions.
\newblock In {\em Advances in Neural Information Processing Systems}, pages
  2051--2059, 2014.

\bibitem{Bresler15}
Guy Bresler.
\newblock Efficiently learning ising models on arbitrary graphs.
\newblock In {\em Procedings of STOC}, 2015.

\bibitem{erdos1945lemma}
Paul Erd\"os.
\newblock On a lemma of littlewood and offord.
\newblock {\em Bulletin of the American Mathematical Society}, 51(12):898--902,
  1945.

\bibitem{fan2015innovated}
Yingying Fan, Yinfei Kong, Daoji Li, Zemin Zheng, et~al.
\newblock Innovated interaction screening for high-dimensional nonlinear
  classification.
\newblock {\em The Annals of Statistics}, 43(3):1243--1272, 2015.

\bibitem{feldman2006new}
Vitaly Feldman, Parikshit Gopalan, Subhash Khot, and Ashok~Kumar Ponnuswami.
\newblock New results for learning noisy parities and halfspaces.
\newblock In {\em Foundations of Computer Science, 2006. FOCS'06. 47th Annual
  IEEE Symposium on}, pages 563--574. IEEE, 2006.

\bibitem{friedman2010regularization}
Jerome Friedman, Trevor Hastie, and Rob Tibshirani.
\newblock Regularization paths for generalized linear models via coordinate
  descent.
\newblock {\em Journal of statistical software}, 33(1):1, 2010.

\bibitem{gamelin2001complex}
Theodore~W Gamelin.
\newblock {\em Complex analysis}.
\newblock Springer Science \& Business Media, 2001.

\bibitem{kalai2009learning}
Adam~Tauman Kalai, Alex Samorodnitsky, and Shang-Hua Teng.
\newblock Learning and smoothed analysis.
\newblock In {\em Foundations of Computer Science, 2009. FOCS'09. 50th Annual
  IEEE Symposium on}, pages 395--404. IEEE, 2009.

\bibitem{kocaoglu2014sparse}
Murat Kocaoglu, Karthikeyan Shanmugam, Alexandros~G Dimakis, and Adam Klivans.
\newblock Sparse polynomial learning and graph sketching.
\newblock In {\em Advances in Neural Information Processing Systems}, pages
  3122--3130, 2014.

\bibitem{koh2007interior}
Kwangmoo Koh, Seung-Jean Kim, and Stephen~P Boyd.
\newblock An interior-point method for large-scale l1-regularized logistic
  regression.
\newblock {\em Journal of Machine learning research}, 8(8):1519--1555, 2007.

\bibitem{lee2006efficient}
Su-In Lee, Honglak Lee, Pieter Abbeel, and Andrew~Y Ng.
\newblock Efficient l1 regularized logistic regression.
\newblock In {\em Proceedings of the National Conference on Artificial
  Intelligence}, volume 21:1, page 401. Menlo Park, CA; Cambridge, MA; London;
  AAAI Press; MIT Press; 1999, 2006.

\bibitem{li2015active}
Xiao Li and Kannan Ramchandran.
\newblock An active learning framework using sparse-graph codes for sparse
  polynomials and graph sketching.
\newblock In {\em Advances in Neural Information Processing Systems}, pages
  2161--2169, 2015.

\bibitem{li2016robust}
Yang Li and Jun~S Liu.
\newblock Robust variable and interaction selection for high-dimensional
  classification via logistic regression.
\newblock {\em arXiv preprint arXiv:1611.08649}, 2016.

\bibitem{mossel2003learning}
Elchanan Mossel, Ryan O'Donnell, and Rocco~P Servedio.
\newblock Learning juntas.
\newblock In {\em Proceedings of the thirty-fifth annual ACM symposium on
  Theory of computing}, pages 206--212. ACM, 2003.

\bibitem{ng2004feature}
Andrew~Y Ng.
\newblock Feature selection, l 1 vs. l 2 regularization, and rotational
  invariance.
\newblock In {\em Proceedings of the twenty-first international conference on
  Machine learning}, page~78. ACM, 2004.

\bibitem{Ravikumar10}
Pradeep Ravikumar, Martin~J. Wainwright, and John~D. Lafferty.
\newblock High dimensional ising model selection using $l-1$ regularized
  logistic regression.
\newblock {\em The Annals of Statistics}, 2010.

\bibitem{valiant2012finding}
Gregory Valiant.
\newblock Finding correlations in subquadratic time, with applications to
  learning parities and juntas.
\newblock In {\em Foundations of Computer Science (FOCS), 2012 IEEE 53rd Annual
  Symposium on}, pages 11--20. IEEE, 2012.

\end{thebibliography}

\clearpage
\appendix
\section{Appendix}
\subsection{Proof of Theorem \ref{thm:main1} }

  This section provides the proof of the theorem. We first develop some notation, state some intermediate Lemmas and then proceed to the main part of the proof.
  
  It can be easily seen that: 
\begin{align}\label{simplifycorr}
\E[Y(X_k-\mu_k)] &= \E [ \E [Y(X_k-\mu_k) | \mathbf{X}] ] \nonumber \\
\hfill                      &= \E[(X_k-\mu_k) \E [Y | \mathbf{X}] ] \nonumber \\
 \hfill                     & =  \E [\sigma(\gamma f(\mathbf{X})) (X_k-\mu_k)]
\end{align}

Let $E_0=1$ if the function takes value $0$. Let $\mathrm{Pr}(\mathbf{x}: f(\mathbf{x})=v_i | x_k=+1) - \mathrm{Pr}(\mathbf{x}: f(\mathbf{x})=v_i | x_k=-1) = g(v_i))$.
\begin{align}
&\E[Y(X_k-\mu_k)]   =  \sum \limits_{i=1}^{m+1} \sigma(\gamma v_i)  [\mathbf{1}_{i \leq m}+\mathbf{1}_{i=m+1,E_0=1}] \nonumber\\
&\hdots \left[ p_k (1-\mu_k) \mathrm{Pr}(\mathbf{x}: f(\mathbf{x})=v_i | x_k=+1) \right. \nonumber\\
            \hfill   & \left. \ldots - (1+\mu_k)(1-p_k)  \mathrm{Pr}(\mathbf{x}: f(\mathbf{x})=v_i | x_k=-1) \right] \nonumber \\
   \hfill                  &= 2p_k(1-p_k) \left[ \sigma(0) g(0) \mathbf{1}_{E_0=1}+ \sum \limits_{i=1}^m \sigma(\gamma v_i)  g(v_i) \right] \label{eq:sigmoidsum}              
\end{align}
The last equality is because $p_k (1-\mu_k) = (1+\mu_k)(1-p_k) = 2p_k(1-p_k) $. 

Let us assume that the non-linearity is sigmoidal as in $(\ref{def:Y})$. This means that $\sigma(x)= \frac{1}{2}+\frac{1}{2}\tanh\left(x/2 \right)$.  There is a domain $U$ (an open set) containing the real line in the complex plane such that $\tanh(\cdot)$ is an analytic function over $U$ \cite{gamelin2001complex}. This implies that $\left[ \sigma(0) g(0) \mathbf{1}_{E_0=1}+\sum \limits_{i=1}^m \sigma(\gamma v_i)  g(v_i) \right] $  is an analytic function in a domain $U'$ (finite linear combinations of scaled version of analytics functions is analytic in an open neighborhood) containing the real line in the complex plane. It is well known that: 
\begin{theorem}
 \cite{gamelin2001complex} If D is a domain on the complex plane and f(z) is a complex analytic function on $D$ that is not identically zero, then the zeros of $f$ in $D$ are isolated.
\end{theorem}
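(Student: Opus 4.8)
The plan is to prove the statement using the local power series representation of analytic functions combined with a connectedness argument. Fix an arbitrary zero $z_0 \in D$ of $f$. Since $f$ is analytic on the open set $D$, it is represented near $z_0$ by a convergent Taylor series $f(z) = \sum_{n \geq 0} a_n (z-z_0)^n$ on some disk $B(z_0, r) \subseteq D$, with $a_n = f^{(n)}(z_0)/n!$. Because $f(z_0)=0$ we have $a_0 = 0$, and the argument then splits according to whether the remaining coefficients all vanish.

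First I would dispose of the case in which some coefficient is nonzero. Let $m \geq 1$ be the least index with $a_m \neq 0$. Factoring out the leading power gives $f(z) = (z-z_0)^m g(z)$, where $g(z) = \sum_{k \geq 0} a_{m+k}(z-z_0)^k$ is analytic on $B(z_0,r)$ and $g(z_0) = a_m \neq 0$. By continuity of $g$ there is a smaller disk $B(z_0, \rho)$ on which $g$ is nonvanishing, so on $B(z_0,\rho)$ the only zero of $f$ is $z_0$ itself; hence $z_0$ is isolated. This part is routine.

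The substance is to rule out the remaining case, in which every $a_n = 0$, i.e. $z_0$ is a \emph{zero of infinite order}. I claim this cannot occur when $f \not\equiv 0$. Define $A = \{ z \in D : f^{(n)}(z) = 0 \text{ for all } n \geq 0 \}$. The set $A$ is closed in $D$, being the intersection of the closed sets $\{f^{(n)} = 0\}$, since each derivative $f^{(n)}$ is continuous (indeed analytic). It is also open: if $w \in A$, then all Taylor coefficients of $f$ at $w$ vanish, so the convergent power series shows $f \equiv 0$ on a disk about $w$, whence every derivative of $f$ vanishes on that disk and the whole disk lies in $A$. Thus $A$ is simultaneously open and closed in $D$.

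Finally I would invoke connectedness. Since $D$ is a domain — connected by definition — and $A$ is clopen, either $A = \emptyset$ or $A = D$. If the infinite-order case held at $z_0$, then $z_0 \in A$, forcing $A = D$ and hence $f \equiv 0$ on $D$, contradicting the hypothesis that $f$ is not identically zero. Therefore no zero has infinite order, every zero falls into the first case, and every zero of $f$ in $D$ is isolated. The one step demanding care — and the crux of the whole argument — is the openness of $A$: it rests on the fact that vanishing of all Taylor coefficients at a single point propagates $f \equiv 0$ to an entire neighborhood, which is precisely where the convergence of the power series (genuine analyticity, not mere infinite real-differentiability) is essential.
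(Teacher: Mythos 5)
Your proof is correct; since the paper states this classical result as a citation to \cite{gamelin2001complex} without proving it, there is no in-paper argument to compare against, and yours is the standard textbook proof: a finite-order zero is isolated via the factorization $f(z)=(z-z_0)^m g(z)$ with $g(z_0)=a_m\neq 0$ and continuity of $g$, while infinite-order zeros are ruled out because the set $A=\{z\in D: f^{(n)}(z)=0 \text{ for all } n\geq 0\}$ is clopen in the connected domain $D$, forcing $f\equiv 0$. You also correctly flag the one delicate step — openness of $A$ rests on the convergent power series at $w\in A$ propagating $f\equiv 0$ to an entire neighborhood, which is exactly where genuine analyticity, rather than mere infinite differentiability, is used.
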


\begin{theorem}\label{littlewood}
\cite{erdos1945lemma} (Littlewood-Offord Theorem) Consider the linear sum $\sum \limits_{k \in [n]} \alpha_i b_i$ where the joint distribution of $\{b_i\}$ is uniform on $\{-1,1\}^n$ and $\alpha_i \in \mathbb{R}$. If $ \lvert \alpha_i \rvert > \nu$, then 
   \begin{align}
    \mathrm{Pr}\left(\sum \limits_{k \in [n]} \alpha_i b_i \in (-\nu/2, \nu/2) \right) \leq \frac{ {\binom{n}{\lfloor n/2 \rfloor}}}{2^n} \approx \frac{1}{\sqrt{n}}
   \end{align}
\end{theorem}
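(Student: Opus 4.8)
The plan is to prove this as the Erdős form of the Littlewood--Offord lemma, reducing the probabilistic statement to a purely combinatorial one about antichains and then invoking Sperner's theorem. The analytic-looking anti-concentration bound is really an extremal set-system fact in disguise, so the whole argument avoids any estimation and rests on one structural observation.

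First I would reduce to the case of positive coefficients. Set $b_i' = \mathrm{sign}(\alpha_i)\,b_i$. Since each $b_i$ is symmetric and the $b_i$ are independent, the vector $(b_i')$ is again uniform on $\{-1,1\}^n$, and $\sum_i \alpha_i b_i = \sum_i |\alpha_i|\, b_i'$. Hence the distribution of the sum and the probability of the target event are unchanged if I replace each $\alpha_i$ by $|\alpha_i|$, so I may assume $\alpha_i > \nu > 0$ for all $i$.

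Next I would identify each sign vector $b \in \{-1,1\}^n$ with the subset $A_b = \{i : b_i = +1\} \subseteq [n]$, so that the $2^n$ equally likely outcomes are indexed by the Boolean lattice. The key observation is geometric: if $A_b \subsetneq A_{b'}$, then flipping from $b$ to $b'$ turns at least one coordinate from $-1$ to $+1$, so $\sum_i \alpha_i b'_i - \sum_i \alpha_i b_i = 2\sum_{i \in A_{b'}\setminus A_b}\alpha_i > 2\nu$. Because the target interval $(-\nu/2,\nu/2)$ has length $\nu < 2\nu$, two sums whose difference exceeds $2\nu$ cannot both lie inside it. Therefore the family $\mathcal{A} = \{A_b : \sum_i \alpha_i b_i \in (-\nu/2,\nu/2)\}$ contains no pair with one set strictly contained in the other; that is, $\mathcal{A}$ is an \emph{antichain} in $2^{[n]}$.

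The main step is then invoking Sperner's theorem: every antichain in $2^{[n]}$ has size at most $\binom{n}{\lfloor n/2\rfloor}$. Applying it to $\mathcal{A}$ and using that each sign vector has probability $2^{-n}$ gives
\begin{align*}
\Pr\!\left(\sum_i \alpha_i b_i \in (-\nu/2,\nu/2)\right) = \frac{|\mathcal{A}|}{2^n} \le \frac{\binom{n}{\lfloor n/2\rfloor}}{2^n},
\end{align*}
and the asymptotic $\approx 1/\sqrt{n}$ follows from Stirling's approximation $\binom{n}{\lfloor n/2\rfloor}\sim 2^n\sqrt{2/(\pi n)}$. I expect the genuine content to be the antichain observation rather than any calculation: once one sees that coefficients exceeding the interval length force the favorable outcomes to be pairwise incomparable, Sperner's theorem does the remaining work, so the hard part is conceptual --- recognizing that an anti-concentration bound for a random signed sum is governed by an extremal bound on antichains.
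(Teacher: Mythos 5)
Your proof is correct and coincides with the standard argument behind this result: the paper offers no proof of its own, deferring to the cited Erd\H{o}s (1945) lemma, whose proof is exactly your reduction (sign-flip to positive coefficients, the observation that strict containment $A_b \subsetneq A_{b'}$ forces the sums to differ by more than $2\nu$ so the favorable sign vectors form an antichain, then Sperner's theorem). The only cosmetic note is that the interval $(-\nu/2,\nu/2)$ gives you slack --- your antichain step would tolerate any interval of length up to $2\nu$ --- but this does not affect correctness.
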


A set $E$ is said to be isolated if for every point $p \in E$, there is an open ball around $p$ that contains no element from $E$ other than p. This implies on the real line when $z=\gamma \in \left[0,1\right]$, the following is true: 
\begin{lemma}\label{lem:zerocond}
When $\sigma(\cdot)$ is an analytic in a domain containing the real line, exactly one of the following is true:
\begin{enumerate} [itemsep=-3pt]
  \item $\left[\sigma(0) g(0) \mathbf{1}_{E_0=1}+\sum \limits_{i=1}^m \sigma(\gamma v_i)  g(v_i) \right] =0 ,~ \forall \gamma \in [0,1] $. 
  \item $\left[\sigma(0) g(0) \mathbf{1}_{E_0=1}+\sum \limits_{i=1}^m \sigma(\gamma v_i)  g(v_i) \right] =0$ for finitely many $\gamma$'s in $[0,1]$.
\end{enumerate}
Here, all the variables ($\gamma$ and $v_i$) and functions ($g$) are defined in $(\ref{eq:sigmoidsum})$.
\end{lemma}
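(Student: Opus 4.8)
The plan is to apply the isolated-zeros theorem (the identity theorem quoted above) to the single function
\[
h(\gamma) := \sigma(0)\, g(0)\, \mathbf{1}_{E_0=1} + \sum_{i=1}^m \sigma(\gamma v_i)\, g(v_i),
\]
regarded as a function of the complex variable $\gamma$. The excerpt already establishes that $h$ is complex analytic on a domain $U'$ (a connected open set) containing the whole real line, so in particular the compact interval $[0,1]$ is contained in $U'$. I would then split on the dichotomy that the identity theorem provides for functions analytic on a domain: either $h$ vanishes identically on $U'$, or its zeros in $U'$ are isolated.

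In the first case I would simply restrict to the real axis: if $h\equiv 0$ on $U'$ then $h(\gamma)=0$ for every $\gamma\in[0,1]$, which is exactly conclusion (1). In the second case, I would argue that only finitely many of the isolated zeros can lie in $[0,1]$. The set $\{\gamma\in[0,1] : h(\gamma)=0\}$ is closed (as the zero set of a continuous function) and bounded, hence compact; were it infinite, Bolzano--Weierstrass would produce an accumulation point $\gamma^\star\in[0,1]\subseteq U'$, and $\gamma^\star$ would then be a non-isolated zero of $h$ inside $U'$, contradicting the conclusion of the identity theorem. Thus $h$ has only finitely many zeros in $[0,1]$, which is conclusion (2).

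Finally, I would note that the two conclusions are mutually exclusive because $[0,1]$ is infinite: conclusion (1) forces infinitely many zeros of $h$ in $[0,1]$, whereas conclusion (2) permits only finitely many. Together with the exhaustive dichotomy above, this shows that exactly one of the two alternatives holds.

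I do not expect a genuine obstacle here, since the argument is essentially a repackaging of the already-quoted identity theorem. The only points deserving care are \emph{(i)} verifying that $U'$ is genuinely connected, so that the identity theorem applies globally rather than on each component separately --- this is inherited from $\sigma$ being analytic on a domain containing the real line and $h$ being a finite linear combination of the rescalings $\gamma\mapsto\sigma(\gamma v_i)$ --- and \emph{(ii)} the compactness step that upgrades ``isolated in $U'$'' to ``finitely many in $[0,1]$'', which can equivalently be phrased via extracting a finite subcover of $[0,1]$ by zero-free punctured neighborhoods of the isolated zeros.
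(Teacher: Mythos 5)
Your proof is correct and follows essentially the same route as the paper, which also reduces the lemma to the quoted identity theorem applied to the analytic function $\gamma \mapsto \sigma(0)g(0)\mathbf{1}_{E_0=1}+\sum_i \sigma(\gamma v_i)g(v_i)$ on a domain $U'$ containing the real line, and then uses compactness of $[0,1]$ to pass from isolated zeros to finitely many zeros. Your added remarks on connectedness of $U'$ and on mutual exclusivity of the two alternatives are sound refinements of the same argument, not a different approach.
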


\textbf{Note:}If the first condition is avoided, then clearly, $\E[Y(x_k-\mu_k)] \neq 0$ a.s for all $\gamma \in [0,1]$. Clearly, when $Y$ is independent of $x_k$, then the first condition is true. 

We now derive some sufficient conditions on $f$ and the bias probabilities $\{\mu_i\}$ such that the first condition is not true when $Y$ depends on $x_k$. This essentially proves that the idealized test works.

 \begin{proof} [Proof of Theorem \ref{thm:main1}]
   From Lemma \ref{lem:zerocond}, it is enough to make sure that the first condition is not true. Since $\sigma(\cdot)$ is analytic in a domain containing the real line, there is a small interval $(0,\gamma_0) \in [0,1]$ (with $\gamma_0 \leq 1$) such that $\sigma(\cdot)$ has a taylor expansion around $0$. Keeping the sigmoidal function (an odd function with a constant shift) in mind, let us for simplicity assume that the taylor expansion is given by: $\sigma(x) = d_0+\sum \limits_{i=0}^{\infty} d_{2i+1} x^{2i+1}$ where $d_0 \in \mathbb{R}\backslash 0$ and $d_{2i+1} \in \mathbb{R}\backslash 0, ~\forall i $. Therefore, in the open interval $(0, \gamma_1= \min (\min \limits_{v_i \neq 0} \frac{\gamma_0}{ \lvert v_i \rvert},\gamma_0) )$, we have the following expansion: 
   \begin{align} \label{taylorexp}
      \sigma(0) g(0) \mathbf{1}_{E_0=1}+\sum \limits_{i=1}^m \sigma(\gamma v_i)  g(v_i)   = d_0 \left[ \sum \limits_{i=1}^m g(v_i)  + \right. \nonumber \\
       \left. g(0) \mathbf{1}_{E_0=1}\right] +\sum \limits_{j=0}^{\infty} d_{2j+1}  \gamma^{2j+1} \left[ \sum \limits_{i=1}^m v_i^{2j+1} g(v_i)  \right]
   \end{align} 

Suppose, the first condition in Lemma \ref{lem:zerocond} is true. Then clearly,  $\sum \limits_{i=1}^m \sigma(\gamma v_i)  g(v_i)  =0$ in the interval $(0,\gamma_1)$. Therefore, all the terms in $(\ref{taylorexp})$ are zero. $ \sum \limits_{i=1}^m g(v_i)+ g(0) \mathbf{1}_{\{f \mathrm{~takes~} 0\}}  = \E[x_1-\mu_1]=0 $ always. Therefore, the hypothesis that the first condition in Lemma \ref{lem:zerocond} is true implies:
\begin{align}\label{vandeqn}
\left[
\begin{array}{cccc}
v_1 & v_2 & v_3 & \ldots v_m \\
v_1^{3} & v_2^{3} & v_3^3 & \ldots v_m^3 \\
\cdots    & \cdots    & \cdots  & \cdots \\
v_1^{2i+1} & v_2^{2i+1} & v_3^{2i+1} & \ldots v_m^{2i+1} \\
\cdots    & \cdots    & \cdots  & \cdots \\
\end{array}
\right] \left[
\begin{array}{c}
g(v_1) \\
g(v_2) \\ 
\cdots \\
g(v_m) \\
\end{array}
\right] = 0 
\end{align}

All the $v_i$'s are distinct. However, there could be $i \neq j: v_i = -v_j$. It can be easily seen that the number of linearly independent columns is $m - \lvert \{ (i,j): v_i = -v_j \} \rvert$ for any finite truncation with $>m$ row.  This is due to two reasons: 1) The matrix truncated is equivalent to a Vandermonde matrix when each column is diivded by some constant, i.e. $i$th column divided by $v_i$ 2) Since, $v_i$'s are distinct in their values,  there is exactly one pair involving $i$ where $v_i=-v_j$ and only this causes reduction in rank.

For every pair $(i,j):v_i=-v_j$, because of the odd-power progression, the null space contains the following the vector with two ones: $[0~ 0 ~0~ \underset{\mathrm{pos.}~i}1~ \cdots~ \underset{\mathrm{pos.}~j}1 ~ \cdots ~0]^T$. All these null-space vectors have disjoint support and hence orthogonal and linearly independent. This means that the null space is $\mathrm{span}\{ [0~ 0 ~0~ \underset{\mathrm{pos.}~i}1~ \cdots~ \underset{\mathrm{pos.}~j}1 ~ \cdots ~0]^T : \exists (i,j) ~\mathrm{with~} v_i=-v_j\}$. Hence, the vector $[g(v_1) \ldots g(v_m)]^T$ avoid the null space of the matrix in $(\ref{vandeqn})$ when 
\begin{align}\label{suffcond1}
&\exists v_i:g(v_i) \neq 0 \mathrm{~and ~for~ the ~same~} v_i \mathrm{~if~} \exists j:v_j=-v_i, \nonumber \\
&\mathrm{~then~}  g(v_j)  \neq  g(v_i) . 
\end{align}

Under the condition in $(\ref{suffcond1})$, according to Lemma \ref{lem:zerocond}, except for finite number of values of $\gamma \in [0,1]$, the idealized test succeeds since $[g(v_1) \ldots g(v_m)]^T$ avoids the null-space of the matrix in (\ref{vandeqn}). Now, we give some conditions on the function $f$ and the probabilities $p_i$ of the variables when the above condition is satisfied. We will assume that  $p_i \in (\delta,1/2-\delta) \cup (1/2+\delta,1-\delta),~\forall i$ for some $\delta>0$ in the all the cases that follow.

\textbf{Case 1}: $f$ has a linear term in $x_1$. $\exists v_i \neq 0:$ such that $\lvert f \rvert = \lvert v_i \rvert$ implies a \textit{unique sign} for all the parities $\{ x_ix_j \}_{(i,j) \in Q},\{x_j\}_{j \in L}$.

The signs of all the parities, uniquely determine the function value. Therefore, the above condition implies that there is no $v_j:v_j = -v_i$. From condition $(\ref{suffcond1})$, we just need to check if $g(v_i) \neq 0$. Since every parity is fixed when $\lvert f\rvert=\lvert v_i\rvert$, , then $x_1$ is also fixed because $x_1$ has a linear term in $f$. Without loss of generality let us assume that when $f=v_i$, the sign of $x_1$ is fixed to be $1$. Based on (\ref{eq:sigmoidsum}) this implies, $g(v_i)=\mathrm{Pr}(\mathbf{x}: f(\mathbf{x}=v_i)|x_1 = +1)$. Since all the bias probabilities $p_i$ are in $(\delta, 1- \delta)$, $g(v_i) \neq 0$.

\textbf{Case 1':} $f$ has only quadratic terms in $x_1$ but there exists a linear term $x_i$ which is in the same component $C$ as $x_1$ in graph G. $\exists v_i \neq 0:$ such that $\lvert f \rvert = \lvert v_i \rvert$ implies a \textit{unique sign} for all the parities $\{ x_ix_j \}_{(i,j) \in Q},\{x_j\}_{j \in L}$.

Since $x_i$ and $x_1$ lie in the same component $C$ in graph $G$, there is a path $P \subseteq E$ from $x_i$ to $x_1$ containing adjacent edges in sequence. Since every parity is fixed when $f=v_i$, , then $x_i$ is also fixed because $x_i$ has a linear term in $f$. Let $x_i,x_j$ be the first edge on $P$ from $x_i$. Since $x_ix_j$ is fixed and $x_i$ is also fixed, $x_j$ is also fixed. By propagating the signs across the path to $x_1$, the sign of $x_1$ is also fixed. Without loss of generality, let $x_1=+1$ when $f=v_i$. By the argument for the previous case, $g(v_i)=\mathrm{Pr}(\mathbf{x}:f(\mathbf{x}=v_i) | x_1=+1)$ and since all bias probabilities $p_i$ are $\delta$ away from $0$ and $1$, $g(v_i) \neq 0$.

\textbf{Case 2:} $f$ has only quadratic terms in $x_1$ and the connected component $C$ containing $x_1$ does not have any linear term in $f$. $\forall$ values $v_i \neq 0:\lvert f \rvert = \lvert v_i \rvert$ implies a \textit{unique sign} for all the parities $\{ x_ix_j \}_{(i,j) \in Q},\{x_j\}_{j \in L}$.

The assumption on the function leads to two properties: 1) For all $v_i$, $f=v_i$ implies a unique sign of the parities corresponding to $Q$ and $L$. 2) For all $v_i$, $\nexists v_j:v_j \neq -v_i$. Since the component of $x_1$ contains no linear terms, $x_1$ being $+1$ or $-1$ could result in the same sign for all parities. Because of the second property, we just need to show that $g(v_i) \neq 0$ for some $i$. Consider a specific $v_i$. Due to the first property, the signs of parities in $Q$ and $L$ are fixed. Let value of the parity $x_ix_j$ with $(i,j) \in Q$ be $w_{i,j}$ while the value of $x_j$ with $j \in L$ be $y_j$. Now, consider the graph $G$ and the connected component containing $x_1$. When $x_1=+1$ and when $f=v_i$, every variable in the component $C$ takes a specific sign. This is because all parities are fixed. Every path from $x_1$ to another variable $x_i$ fixes $x_i$. This can be seen using arguments identical to Case $2$. For $x_{\ell} \in C$, let the sign taken by $x_{\ell}$ when $x_1=1$ be $z^{+}_{\ell}$. When $x_1=-1$, all the variables $x_i$ in $C$ change sign. Let the signs when $x_1=-1$ be $z_{\ell}^{-}=-z_{\ell}^{+}$.  The other, variables outside $C$ are unaffected by the sign change of $x_1$. For ease of notation, define $q_{\ell}^{+} =(z^{+1}_{\ell} +1)/2$ and $q_{\ell}^{-} = (z^{-}_{\ell} +1)/2$. This means that $q^{+}_{\ell}=0 \Rightarrow q^{-}_{\ell}=1$  and vice-versa. Therefore, 
\begin{align}\label{gvalue}
 g(v_i) & =  \mathrm{Pr}\left( x_ix_j = w_{i,j} ~\forall (i,j) \in Q: i,j \notin C, \right. \nonumber \\
  \hfill & \left. ~ x_j =y_j ~\forall j \in L \right)* \left[ \mathrm{Pr} \left( x_{\ell} = z_{\ell}^{+1} ~\forall \ell \in C-\{1\} \right)\right. \nonumber \\
    \hfill & \left. - \mathrm{Pr} \left( x_{\ell} = z_{\ell}^{-1}~\forall \ell \in C-\{1\} \right)\right] \nonumber \\ 
    \hfill & = \mathrm{Pr}\left( x_ix_j = w_{i,j} ~\forall (i,j) \in Q: i,j \notin C, \right. \nonumber \\
      \hfill & \left. x_j =y_j ~\forall j \in L \right) * \left( \prod \limits_{j \in C-\{1\}} p_j \right) *\nonumber \\
       \hfill &  \left[ \prod \limits_{j \in C-\{1\}} \left( \frac{1-p_j}{p_j} \right)^{1-q^{+}_{j}} -  \prod \limits_{j \in C-\{1\}} \left( \frac{1-p_j}{p_j} \right)^{q^{+}_{j}} \right] 
 \end{align}      

Let,
 \begin{align}
  e(v_i) & = \mathrm{Pr}\left( x_ix_j = w_{i,j} ~\forall (i,j) \in Q: i,j \notin C, \right.\nonumber \\
   \hfill & \left.  x_j =y_j ~\forall j \in L \right)
  \end{align}
Now, $ \lvert e(v_i) \rvert * \left( \prod \limits_{j \in C-\{1\}} p_j \right) \geq \delta^{r}$ since $(1-p_i),p_i \geq \delta, ~ \forall i$. Therefore,
\begin{align} \label{gbound1}
  \lvert g(v_i) \rvert & \geq \delta^r \left \lvert  \prod \limits_{j \in C-\{1\}} \left( \frac{1-p_j}{p_j} \right)^{1-q^{+}_{j}} -  \prod \limits_{j \in C} \left( \frac{1-p_j}{p_j} \right)^{q^{+}_{j}} \right \rvert
\end{align} 

$\frac{1-p}{p}$ is a decreasing function in $p$.  Consider $ \log \left( \frac{1}{p} -1 \right)$ (with base $e$). At $p=1/2,~ \log (1/p-1)=0$. Further, since $p_i,~\forall i$ is away from $0,1$ and $1/2$ by $\delta>0$, $ \infty  > \log (1/p_i -1 ) > \log (\frac{2}{1-2\delta} -1) \mathrm{~or~} -\infty<\log (1/p_i-1) < \log (\frac{2}{1+2\delta} -1) $. Let $\delta' =  \lvert \log (\frac{1+2\delta}{1-2\delta} )\rvert$. Then,
\begin{equation}\label{lowerbndlog}
 \lvert \log (1/p_i-1)\rvert > \delta'.
\end{equation}
Therefore, $(\ref{gbound1})$ becomes:
\begin{align} \label{gbound2}
 \lvert g(v_i) \rvert &\geq \delta^r * \prod \limits_{j \in C-\{1\}} \left( \frac{1-p_j}{p_j} \right)^{q^{+}_{j}}  * \nonumber \\
  \hfill      & \left \lvert  \prod \limits_{j \in C-\{1\}} \left( \frac{1-p_j}{p_j} \right)^{1-2q^{+}_{j}} -  1 \right \rvert  \nonumber \\
   \hfill      & \geq \delta^r \left(\frac{1}{1-\delta}-1\right)^{\lvert C \rvert-1} \left \lvert  e^{\sum \limits_{j} \log \left(\frac{1-p_j}{p_j}\right) \left(1-2q^{+}_{j} \right)} -  1 \right \rvert \nonumber \\
    \hfill    & \geq \delta^{2r} \left \lvert  e^{\sum \limits_{j \in C -\{1\}} \log \left(\frac{1-p_j}{p_j}\right) \left(1-2q^{+}_{j} \right)} -  1 \right \rvert 
\end{align}  

Now, the variables $ \left(1-2q^{+}_{j} \right)$ are binary variables which take values in $\{1,-1\}$. $v_i$ decides the parities uniquely and hence $q^{+}_{j}$ is unique. $f$ takes values $v_i$ and possibly $0$. There are totally $n=m$ or $n=m+1$ values depending on whether $f$ takes the value $0$ or not. Suppose a value is drawn among the $n$ values randomly uniformly, we show that this induces a joint distribution of $\{q^{+}_{j}\}_{j \in C- \{1\}}$ which is uniform.

The component $C$ does not contain any linear term. Therefore, the signs of parities in $C$ and the rest of the parities are independent. Consider a spanning tree $T$ of $C$. It has $|C|-1$ (bilinear) parities and it has $|C|$ variables. When a parity $x_ix_j = +1$, then it corresponds to the linear equation $y_i +y_j = 0$ over the binary field $\mathbb{F}_2$ where $y_i$ and $y_j$ are $0,1$ variables corresponding to $x_i$ and $x_j$ respectively. $+$ is an XOR operation over the field. $1$ is mapped to $0$ and $-1$ is mapped to $1$. Now, the set of equations corresponding to the parities in $T$ ($|C|-1$ in number) are linearly independent over the binary field $\mathbb{F}_2$. Let the linear system be represented as $\mathbf{T}\mathbf{y}$ where $\mathbf{T} \in \mathbb{F}_2^{|C|-1 \times |C|}$. Let us add an extra equation that sets $x_1$. This means that $y_1$ is set to a value. Now, the new set becomes $\mathbf{T}'\mathbf{y}$ where the last row has a $1$ in the $1$st position corresponding to an equation for $y_1$. This corresponds to a full rank set of equations in $|C|$ variables. Now, if $\mathbf{T}'\mathbf{y}=\mathbf{b}$, then $\mathbf{y}=(\mathbf{T}')^{-1} \mathbf{b}$ because $\mathbf{T}'$ is full rank. Now, if the parity values of $T$ along with the value of $x_1$ take uniform random values then the value of every variable in $C$ is also uniform. Clearly, when $x_1$ is fixed to be $1$, the remaining variables are also uniform random. This means that if the parity values in $C$ are uniform random then clearly, the $y_i$ values are uniformly random. This means that joint distribution of $\{q^{+}_{j}\}_{j \in C- \{1\}}$ is uniform.

Now, we show that uniform distribution on the $n$ values of $f$ induces a uniform distribution on the parities of $C$. Every value has a unique sign for the set of parities in $Q,L$. The set of parities in $C$ are independent of the rest of the parities. Therefore, the number of function values when the parities in $C$ are fixed is exactly $\frac{n}{2^{|C|-1}}$ and is exactly the same irrespective of the values of parities for $C$. This also means $n$ is a power of $2$. Therefore, a uniform distribution on the values will induce a uniform distribution on the parities of $C$ which has been shown to induce a uniform distribution on $\{q^{+}_{j}\}_{j \in C- \{1\}}$. 

By $(\ref{lowerbndlog})$, Theorem \ref{littlewood} and the arguments above with respect to a uniform random variable $V$ on the $n$ values of the function, we have: 
 
 \begin{align}\label{eqn:halfbig}
   \mathrm{Pr} ( \lvert g(V) \rvert > \delta^{2r} \min (2^{\delta'/2}-1, 1- 2^{-\delta'/2}) & \geq 1 - \frac{O(1)}{\sqrt{|C|}}) \nonumber \\
    \hfill  \geq 1/2.
 \end{align} 
 
 This means that for more than $\frac{1}{2}$ of the $v_i$'s (one less than this because $f$ could possible take $0$), $g(v_i) \neq 0$.
 
 \end{proof}

\subsection{Proof of Theorem \ref{samplecomp} }%\ref{sample comp}
 Let the non zero function values be $\{v_i\}_{i=1}^m$. Let $v_{m+1}=0$ to be used when the function takes value $0$. Let the event that the function takes $0$ be called $E_0$ (a $0,1$) boolean variable.  The strong general position property implies that $\lvert v_i \rvert > \epsilon$ and $\lvert v_i -v_j \rvert > \epsilon$ and $\lvert v_i+v_j \rvert > \epsilon$. Together, this means that $\lvert \lvert v_i\rvert -\lvert v_j\rvert \rvert > \epsilon$. Further, any value (non-zero or zero) of the function determines a unique sign for the parities in $\{Q,L\}$. For any non-linearity $\sigma(\cdot)$, (\ref{eq:sigmoidsum}) holds. Therefore,
 \begin{align} \label{correxpand}
   \E[Y(X_k-\mu_k)] &=  2p_1(1-p_1) \left[ \sigma(0) g(0) \mathbf{1}_{E_0=1}+ \right. \nonumber \\
    \hfill              & \left. \sum \limits_{i=1}^m \sigma(\gamma v_i)  g(v_i) \right] \nonumber \\
       \hfill          & = p_1(1-p_1) \left[ \mathbf{1}_{E_0=1}g(0)+ \sum \limits_{i=1}^m  g(v_i) \right]  \nonumber \\
       \hfill         &   + p_1(1-p_1)  \left[\sum \limits_{i=1}^m L(\gamma v_i)  g(v_i) \right]
 \end{align}   
Again, $\E[x_1-\mu_1] = 0 = p_1(1-p_1) \left[ \mathbf{1}_{E_0=1}g(0)+ \sum \limits_{i=1}^m  g(v_i) \right] $. Let $g'(v_i)=\mathrm{sgn}(v_i) g(v_i)$. Therefore, $(\ref{correxpand})$ becomes:
  \begin{align}\label{piecewiselinear}
    \E[Y(X_k-\mu_k)] &=  p_1(1-p_1)  \left[\sum \limits_{i=1}^m L(\gamma v_i)  g(v_i) \right] \nonumber \\
      \hfill                     & = p_1(1-p_1)  \left[\sum \limits_{i=1}^m L(\gamma \lvert v_i \rvert)  g'(v_i) \right]
  \end{align}
  
  Since, $L$ is a piece wise linear function, the above expression is also piecewise linear with respect to $\gamma \in [0, \infty)$. Without loss of generality, let us assume that $v_i$ is ordered according to the the index $i$ such that $\lvert v_i \lvert < \lvert v_j\rvert, ~\forall i > j )$. In the piecewise linear function in $(\ref{piecewiselinear})$, the function is linear between $\frac{1}{\lvert v_i \rvert}$ and $\frac{1}{\lvert v_{i+1}\rvert}$ $\forall i \geq 1$ and it is also linear between $0$ and $\frac{1}{\lvert v_1 \rvert}$ and linear in $(\frac{1}{\lvert v_m \rvert}, \infty)$. Now, in every linear part, the function can cross $0$ at most once. Hence, if the slope is not too small, the function is larger in absolute except for a very small interval around this zero. We first analyze the number of intervals among $(0, \frac{1}{\lvert v_1 \rvert})$ , $(\frac{1}{\lvert v_i \rvert},\frac{1}{\lvert v_{i+1} \rvert}),~ 1 \leq i\leq m-1$ and $(\frac{1}{\lvert v_m \rvert},\infty)$ where the magnitude of the slope is large. Let $b_0,b_1 \ldots b_{m-1}, b_m$ be the slopes of these intervals. Then, we have the following linear system from the definition of $L(\cdot)$:
  \begin{equation} \label{eqn:slope}
  \left[
  \begin{array}{cccc}
   |v_1| &  |v_2| & \cdots & |v_m| \\
   1      & |v_2|  & \cdots  & |v_m| \\
   1      &   1      &  \cdots & |v_m| \\
   \cdots & \cdots &\cdots & \cdots \\
   1 &  1 &  \cdots & 1  
  \end{array}
  \right]
  \left[ 
    \begin{array}{c}
      g'(v_1) \\
       \cdots \\
       \cdots \\
      g'(v_m) 
    \end{array}
  \right] = \left[ 
    \begin{array}{c}
       b_0 \\
       \cdot \\
       \cdot \\
       b_m
     \end{array}
  \right] 
  \end{equation}
  
 First, we observe that since $\lvert \lvert v_i \rvert - \lvert v_j\rvert  \rvert > \epsilon$ for all $i \neq j$, there is at most one $\lvert v_i \rvert$ such that $\lvert \lvert v_i  \rvert -1 \rvert < \epsilon/4 $. This means that, for all $i$ except one $ \lvert \lvert v_i\rvert -1 \rvert > \epsilon/4 $.
 
 \textbf{Case 1:} When the function $f$ has a linear term in $x_1$ or if the component $C$ containing $x_1$ has a linear term, by the same arguments in Theorem \ref{thm:main1}, $g(v_i) = \mathrm{Pr} \left( \mathbf{x}: f(\mathbf{x}) =v_i | x_1 =+1 \right)$ or $g(v_i)=-\mathrm{Pr} \left( \mathbf{x}: f(\mathbf{x}) =v_i | x_1 =-1 \right)$ for all $i$. Here, every values $v_i$ implies a unique sign for the parities. Clearly, since $p_i,1-p_i \geq \delta$, in this case: $\lvert g'(v_i)\rvert \geq \delta^r,~\forall i$. Now, consider the matrix equation in $(\ref{eqn:slope})$. If, for a particular $i$, $\lvert b_i  \rvert < \frac{\epsilon}{8} \delta^r $, then both $\lvert b_{i-1} \rvert > \frac{\epsilon}{8} \delta^r  $ and $\lvert b_{i+1} \rvert >\frac{\epsilon}{8} \delta^r $. This is because consecutive $b_i$'s differ by $(\lvert v_i \rvert -1) g'(v_i)$ which is bounded below in absolute value. This means that at least $m/2 $ slopes among $b_0, \ldots b_{m-1}$ are large in magnitude. Each interval is of length at least $\lvert \frac{1}{|v_i|} -\frac{1}{|v_{i+1}|} \rvert \geq \frac{\epsilon}{b^2s^2}$. It is easy to verify that in an interval $(\frac{1}{\lvert v_{i}\rvert}, \frac{1}{\lvert v_{i+1}\rvert})$ whose slope is greater than $\frac{\epsilon}{8} \delta^r$, except for a smaller interval of length $\frac{\epsilon}{4b^2s^2 }$ contained in it, the correlation  $\lvert \E[Y(X_k-\mu_k)] \rvert  > \frac{\epsilon^2 \delta^{r+2}}{32 b^2s^2} $ (since $(1-p_1)p_1\geq \delta^2$). Therefore, the test succeeds for $\gamma$ in a set of measure $\frac{3m\epsilon}{8b^2s^2} $ in the open interval $(0, \frac{1}{\lvert v_m \rvert}) \subseteq (0, \frac{1}{\epsilon})$.

 \textbf{Case 2:} Consider the case when $f$ is such that the component containing $x_1$ has no linear term. Further, from the assumptions, $f$ is such that $|f|= \lvert v_i \rvert$ fixes the signs of all parities. Therefore, the value of $g(v_i)$ is identical to Case $2$ in Theorem \ref{thm:main1}. From $(\ref{eqn:halfbig})$, more than $\left(1-\frac{O(1)}{\sqrt{|C|}}\right)$ fraction of the $\lvert g'(v_i) \rvert$'s are larger than $\delta^{''}=\delta^{2r} \min (2^{\delta'/2}-1, 1- 2^{-\delta'/2})$ where $\delta'= \left\lvert \log \left(\frac{2}{1-2\delta}\right)\right\rvert$.
 
 Let ${\cal I}= \{i: \lvert g'(v_i) \rvert > \delta^{''}  \}$. From the discussion in the preceding paragraph, $\lvert {\cal I} \rvert \geq m(1-\frac{O(1)}{\sqrt{|C|}})$. Now, clearly, for every $i \in {\cal I}$ either $b_i$ or $b_{i+1}$ is large ($> \delta^{''} \frac{\epsilon}{8}$) in magnitude because $\lvert |v_i| -1\rvert \lvert g'(v_i) \rvert \geq \delta^{''} \frac{\epsilon}{4}$. Therefore, there can be at least $ \frac{\lvert \cal I \rvert}{4}$ slopes among $b_0, \cdots b_{m-1}$ that is large. Therefore, modifying the constants in the previous case with constants derived above, we have: For $\gamma$ in a set of measure $\frac{3m\epsilon}{16b^2s^2} (1-\frac{O(1)}{\sqrt{|C|}}) $ in the open interval $(0, \frac{1}{\lvert v_m \rvert}) \subseteq (0, \frac{1}{\epsilon})$, $\lvert \E[Y(X_k-\mu_k)] \rvert  > \frac{\epsilon^2 \delta^{2}\delta^{''}}{32 b^2s^2}$.

\section{Parameter Values for the Function in Section \ref{sec:specificFunction}}
\label{sec:parameterVals}
Following are the selected parameters: $\alpha = 2.008,\beta=2,\gamma=2$. Probability mass function for the variables are given by 
\begin{align*}
&X_1:&[ 0.11524836,  0.29707109,  0.28290707,  0.30477349]\\
&X_2:&[0.59194937,  0.20854834 , 0.18259408 , 0.01690821]\\
&X_3:& [0.05462701 , 0.12636817  ,0.01060465 , 0.80840018].
\end{align*}

\section{Strong Support Identifying Algorithms}

We assume that $f$ depends on $r$ variables $\{x_1,x_2,...,x_r\}$. We assume that the weak support learning algorithm has identified the weak support containing the $r$ relevant variables. The objective now is to determine whether $x_i$ appears as a linear term, quadratic term, or both in the function. We study Algorithm \ref{alg:strongsupport} for an important special class of functions.

\begin{algorithm}[tb]
\begin{small}
   \caption{Identify Strong Support}
   \label{alg:strongsupport}
\begin{algorithmic}
   \STATE {\bfseries Input:} Data $\mathbf{X}\in \{+1,-1\}^{n\times p}, Y\in \{0,1\}^{n\times 1}$, Weak Support $S$, Threshold $\theta$.
   \STATE {Initialize:} $T=\emptyset$.
      \FOR{$i=1$ {\bfseries to} $p$}
      \FOR{$j=i+1${\bfseries to} $p$}
   \STATE {Estimate $U_{a,b}(i,j)=\E[Y|X_i=a,X_j=b]$ by \\$\hat{U}_{a,b}(i,j)=\frac{1}{t}\sum_{k=1}^t Y_{a,b}^{(i,j)}(k)$ for $(a,b)\in \{+1,-1\}^2$, where $Y_{a,b}^{(i,j)}=\{Y(l):X(l,i)=a, X(l,j)=b\}$ and $t$ is the size of this set.}
   \IF{$\lvert \hat{U}_{1,1}(i,j) - \hat{U}_{1,1}(i,j) \rvert < \theta$ {\bfseries and} \\\hspace{0.15in}$\lvert \hat{U}_{1,-1}(i,j) - \hat{U}_{-1,1}(i,j) \rvert < \theta$}
   \STATE $T=T\cup \{(x_i,x_j)\}$.
   \ENDIF
   \ENDFOR
   \ENDFOR
   \STATE{\bfseries Output:} $T$ and variables in $S$ but not appearing in $T$.
\end{algorithmic}
\end{small}
\end{algorithm}

We have the following proposition regarding the above algorithm for a very special class of functions.

\begin{proposition}\label{thm:strong}
Assume that each variable $x_i$ appears once in the function $f$, i.e., either as $\alpha_i x_i$ for some $i \in L$ or as $\beta_{i,j}x_ix_j$ for some $i,j \in Q$ but not both. Then, $\mathbb{E}[Y|X_i=1,X_j=1]=\mathbb{E}[Y|X_i=-1,X_j=-1] $ and $\mathbb{E}[Y|X_i=1,X_j=-1]=\mathbb{E}[Y|X_i= -1,X_j=+1]$ when $(i,j) \in Q$ and one of the checks is not true when $(i,j) \notin Q$. 
\end{proposition}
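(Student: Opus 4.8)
The plan is to reduce everything to the conditional mean $F(a,b) := \mathbb{E}[Y\mid X_i=a,X_j=b]$ and to exploit the one-appearance hypothesis to decompose $f$. Since $\mathbb{E}[Y\mid \mathbf{X}]=\sigma(\gamma f(\mathbf{X}))$, we have $F(a,b)=\mathbb{E}[\sigma(\gamma f(\mathbf{X}))\mid X_i=a,X_j=b]$. Because each variable occurs in exactly one term, I would write $f = P_i + P_j + h$, where $P_i$ is the unique term containing $x_i$, $P_j$ the term containing $x_j$, and $h$ collects the remaining terms and the constant $c$; crucially these three pieces involve disjoint sets of variables and are therefore independent under the product distribution. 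Writing $R_i$ for the value of $P_i$ at $x_i=+1$ (a constant $\alpha_i$ if $x_i$ is linear, or $\beta_{i,k}X_k$ if $x_i$ pairs with $x_k$), $R_j$ analogously, and $H:=h$, the fact that each term is odd in its own variable yields the clean form $F(a,b)=\mathbb{E}[\sigma(\gamma(a R_i + b R_j + H))]$ with $R_i,R_j,H$ mutually independent.

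For the forward direction, when $(i,j)\in Q$ the shared quadratic term is the only place either variable occurs, so $f=\beta_{i,j}x_ix_j + h$ and $F(a,b)=\mathbb{E}[\sigma(\gamma(\beta_{i,j}\,ab + H))]$ depends on $(a,b)$ only through the product $ab$. Since $(1,1)$ and $(-1,-1)$ share $ab=1$ while $(1,-1)$ and $(-1,1)$ share $ab=-1$, both equalities claimed in the proposition hold exactly.

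For the reverse direction, when $(i,j)\notin Q$ the variables $R_i$ and $R_j$ come from distinct, disjoint terms. I would introduce the four sign-atoms $S_{st}:=\mathbb{E}_H[\sigma(\gamma(s\,c_i + t\,c_j + H))]$ for $s,t\in\{+1,-1\}$, where $c_i=\lvert R_i\rvert$, $c_j=\lvert R_j\rvert$ are strictly positive (coefficients are bounded below). Expanding $F$ over the signs of $R_i,R_j$ and setting $u:=S_{++}-S_{--}$, $w:=S_{+-}-S_{-+}$, the two checks become the linear system
\begin{equation*}
(\rho_i+\rho_j-1)\,u + (\rho_i-\rho_j)\,w = 0, \qquad (\rho_i-\rho_j)\,u + (\rho_i+\rho_j-1)\,w = 0,
\end{equation*}
where $\rho_i:=\Pr[R_i=c_i]$, and a short computation factors its determinant as $(2\rho_i-1)(2\rho_j-1)$. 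If both checks held, a nonzero determinant would force $u=w=0$; but strict monotonicity of $\sigma$ together with $\gamma>0$ makes $\phi(t):=\mathbb{E}_H[\sigma(\gamma(t+H))]$ strictly increasing, so $u=\phi(c_i+c_j)-\phi(-(c_i+c_j))\neq 0$ because $c_i+c_j>0$. This contradiction shows at least one check must fail.

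The hard part is guaranteeing the determinant $(2\rho_i-1)(2\rho_j-1)$ is nonzero. For a linear term $R_i$ is constant, so $\rho_i\in\{0,1\}$ and $2\rho_i-1=\pm1$ automatically; for a quadratic term with partner $x_k$ one has $\rho_i\in\{p_k,1-p_k\}$, so nonvanishing requires $p_k\neq 1/2$. This is exactly where the ambient assumption that all biases are bounded away from $1/2$ must be invoked, and it is genuinely necessary: if both $x_i$ and $x_j$ lived in quadratic terms whose partners had bias exactly $1/2$, then $F(a,b)$ would be independent of $a$ and $b$, both checks would pass, and the algorithm would wrongly declare $(i,j)\in Q$. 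I would therefore use the bias-away-from-$1/2$ hypothesis at this single point; the remaining steps (the atom expansion and the monotonicity argument) are routine.
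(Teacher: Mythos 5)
Your proof is correct, and the forward direction ($(i,j)\in Q$ implies both checks hold because the conditional mean depends only on the product $ab$) is identical to the paper's Case 1. For the converse, however, you take a genuinely different and more unified route. The paper splits $(i,j)\notin Q$ into three separate cases (both linear; one linear and one quadratic; both quadratic with distinct partners), expands the conditional expectations by hand in each, and derives a contradiction from bespoke algebra ($Ap=B(1-p)$, $A(1-p)=Bp$, etc.) together with a monotonicity lemma for $\E[\sigma(f+c)]$. You instead observe that the one-appearance hypothesis always gives $F(a,b)=\E[\sigma(\gamma(aR_i+bR_j+H))]$ with $R_i,R_j,H$ independent and $\lvert R_i\rvert,\lvert R_j\rvert$ deterministic, reduce both checks to a single $2\times 2$ linear system in $u=S_{++}-S_{--}$ and $w=S_{+-}-S_{-+}$ with determinant $(2\rho_i-1)(2\rho_j-1)$, and finish with strict monotonicity of $\sigma$ (which forces $u>0$). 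This buys three things: one argument covers all three sub-cases (for a linear term $\rho\in\{0,1\}$, for a quadratic term $\rho\in\{p_k,1-p_k\}$, so the bias-away-from-$1/2$ assumption enters at exactly one point); it makes transparent why that assumption is necessary, via your counterexample with a partner of bias exactly $1/2$; and it silently repairs a small lacuna in the paper's Case 3, where the conclusion ``$A=B$ and $p=0.5$'' skips the possibility $A=B=0$ (which must be separately excluded using the lower bound on the coefficients). The only caveats, shared with the paper's own proof, are that the bias condition $p_k\neq 1/2$ and the strict monotonicity of $\sigma$ are ambient assumptions not restated in the proposition, and that both arguments implicitly assume $i$ and $j$ actually appear in $f$.
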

\begin{proof}
In order to prove the theorem, following simple lemmas are useful:
\begin{lemma}
\label{lem:sigmoidBound}
For any $f_1(x), f_2(x)$, we have
\begin{equation}
\sigma((f_1+f_2)(x))>\sigma(f_1(x))\sigma(f_2(x)).
\end{equation}
\end{lemma}
\begin{proof}
\begin{align*}
\sigma(f_1+f_2)&=\frac{\exp(f_1+f_2)}{1+\exp(f_1+f_2)}\\
\sigma(f_1)\sigma(f_2)&=\frac{\exp(f_1+f_2)}{1+\exp(f_1+f_2)+\exp(f_1)+\exp(f_2)}\\
\end{align*}
The proof follows from the fact that $\exp(t)>0,\forall t$.
%Proof follows from the expansion of sigmoid and the fact that exponential function is  positive.
\end{proof}

\begin{lemma}
\label{lem:sigmoidBound2}
Let $c\geq 0$ be a constant and $f(x)$ be a bounded function. Then we have
\begin{equation}
\E[\sigma(f+c)]\geq\E[\sigma(f)],
\end{equation}
with equality if and only if $c=0$.
\end{lemma}
\begin{proof}
For random variable $x\in [m]$, and $p_i=\Pr(x=i)$,
\begin{align*}
\E[\sigma(f(x)+c)]&=\sum_{i=1}^m\sigma(f(i)+c)p_i\\
&\geq \sum_{i=1}^m\sigma(f(i))p_i=\E[\sigma(f(x))]
\end{align*}
If $c=0$, equality is trivial. If $c>0$, then $\sigma(\alpha+c)>\sigma(\alpha)$ for any $\alpha$ since $\sigma$ is a monotonically increasing function; hence the strict inequality. 
%Proof follows from the fact that $\sigma$ is monotonically increasing.
\end{proof}

It is sufficient to prove that $S$ contains all the quadratic terms and no linear terms.

\textbf{Case 1:} $x_i$ and $x_j$ appears as a quadratic term, i.e. $\beta_{i,j}x_ix_j$:

Since $x_i, x_j$ does not appear in another term, by the assumption of the function, expectation does not change unless the value of $x_ix_j$ changes, $\mathbb{E}[Y|X_i=1,X_j=1]=\mathbb{E}[Y|X_i=-1,X_j=-1] $ and $\mathbb{E}[Y|X_i=1,X_j=-1]=\mathbb{E}[Y|X_i= -1,X_j=+1]$ 

\textbf{Case 2:} $x_i$ and $x_j$ appear as linear terms, $\alpha_j x_j$ and $\alpha_i x_i$:

Then, for the first pair of checks in the test, we have $\mathbb{E}[Y|X_i=1,X_j=1] =\E[\sigma(f_0+\alpha_i+\alpha_j)]$ and $\E[\mathbb{E}[Y|X_i=-1,X_j=-1]=\E[\sigma(f_0-\alpha_i-\alpha_j)]$. Defining $h_0=f_0-\alpha_i-\alpha_j$, we have $\E[\sigma(h_0+2(\alpha_i+\alpha_j))]=\E[\sigma(h_0)]$. Lemma \ref{lem:sigmoidBound2} implies $\alpha_i=-\alpha_j$. Similarly, second check implies $\alpha_i=\alpha_j$, which is not possible. Hence, the check fails in this case.

\textbf{Case 3:} $x_i$ appears in a linear term $\alpha_ix_i$ and $x_j$ appears in a quadratic term, $\beta_{j,k} x_jx_k$:

Let us assume that both checks succeed and show a contradiction.
Expand $\E[\sigma(f+\alpha_iX_i+\beta_{j,k}X_jX_k)]$ as
\begin{align*}
&\E[\sigma(f+\alpha_iX_i+\beta_{j,k}X_jX_k)]\\
&=\E[\sigma(f+\alpha_iX_i+\beta_{j,k}X_j)]p(X_k=1)\\
&+\E[\sigma(f+\alpha_iX_i-\beta_{j,k}X_j)]p(X_k=-1)
\end{align*}
First check yields (denoting $p(X_k=1)$ by $p$)
\begin{align*}
&\E[\sigma(f+\alpha_i+\beta_{j,k})]p+\E[\sigma(f+\alpha_i-\beta_{j,k})](1-p)\\
&=\E[\sigma(f-\alpha_i\beta_{j,k})]p+\E[\sigma(f-\alpha_i+\beta_{j,k})](1-p),
\end{align*}
and the second check yields
\begin{align*}
&\E[\sigma(f+\alpha_i-\beta_{j,k})]p+\E[\sigma(f+\alpha_i+\beta_{j,k})](1-p)\\
&=\E[\sigma(f-\alpha_i+\beta_{j,k})]p+\E[\sigma(f-\alpha_i-\beta_{j,k})](1-p),
\end{align*}
Let $h(\mathbf{X})=f(\mathbf{X})-\alpha_i-\beta_{j,k}$. Then the equations become
\begin{align*}
&\E[\sigma(h+2\alpha_i+2\beta_{j,k})]p+\E[\sigma(h+2\alpha_i)](1-p)\\
&=\E[\sigma(h)]p+\E[\sigma(h+2\beta_{j,k})](1-p),
\end{align*}
and
\begin{align*}
&\E[\sigma(h+2\alpha_i)]p+\E[\sigma(h+2\alpha_i+2\beta_{j,k})](1-p)\\
&=\E[\sigma(h+2\beta_{j,k})]p+\E[\sigma(h)](1-p),
\end{align*}
respectively. Rearranging yields
\begin{align*}
&\left(\E[\sigma(h+2\alpha_i+2\beta_{j,k})]-\E[\sigma(h)]\right)p\\
&=\left(\E[\sigma(h+2\beta_{j,k})]-\E[\sigma(h+2\alpha_i)]\right)(1-p),
\end{align*}
and 
\begin{align*}
&\left(\E[\sigma(h+2\alpha_i+2\beta_{j,k})]-\E[\sigma(h)]\right)(1-p)\\
&=\left(\E[\sigma(h+2\beta_{j,k})]-\E[\sigma(h+2\alpha_i)]\right)p,
\end{align*}
Renaming the difference of expectation terms, we have $Ap=B(1-p)$ and $A(1-p)=Bp$. This implies that, $A=B$ and $p=0.5$. But $p=0.5$ is not possible. This contradicts the assumption that the checks succeed.

\textbf{Case 4:} $x_i$ and $x_j$ both appear as distinct quadratic terms, $\beta_{i,k}x_ix_k, \beta_{j,\ell}x_jx_{\ell}$:

Similar to case 3, we assume the checks succeed and show a contradiction. Expand $\E[\sigma(f+\mathbf{\beta_{i,k}X_iX_k}+\mathbf{\beta_{j,\ell}X_jX_{\ell}}]$ by conditioning on both $X_k$ and $X_{\ell}$
\begin{align*}
&\E[\sigma(f+\beta_{i,k}X_iX_k+\beta_{j,\ell}X_jX_{\ell}])]\\
&=\E[\sigma(f+\beta_{i,k}X_i+\beta_{j,\ell}X_j]p(X_k=1)p(X_{\ell}=1)\\
&+\E[\sigma(f+\beta_{i,k}X_i-\beta_{j,\ell}X_j]p(X_k=1)p(X_{\ell}=-1)\\
&+\E[\sigma(f-\beta_{i,k}X_i+\beta_{j,\ell}X_j]p(X_k=-1)p(X_{\ell}=1)\\
&+\E[\sigma(f-\beta_{i,k}X_i-\beta_{j,\ell}X_j]p(X_k=-1)p(X_{\ell}=-1)
\end{align*}
First check yields (denoting $p(X_k=1)$ by $p$, $p(X_{\ell}=1)$ by $q$, and $\bar{p}=1-p, \bar{q}=1-q$)
\begin{align*}
&\E[\sigma(f+\beta_{i,k}+\beta_{j,\ell})]pq+\E[\sigma(f+\beta_{i,k}-\beta_{j,\ell})]p\bar{q}\\
&+\E[\sigma(f-\beta_{i,k}+\beta_{j,\ell})]\bar{p}q+\E[\sigma(f-\beta_{i,k}-\beta_{j,\ell})]\bar{p}\bar{q}\\
=&\E[\sigma(f-\beta_{i,k}-\beta_{j,\ell})]pq+\E[\sigma(f-\beta_{i,k}+\beta_{j,\ell})]p\bar{q}\\
&+\E[\sigma(f+\beta_{i,k}-\beta_{j,\ell})]\bar{p}q+\E[\sigma(f+\beta_{i,k}+\beta_{j,\ell})]\bar{p}\bar{q}
\end{align*}
and the second check yields
\begin{align*}
&\E[\sigma(f+\beta_{i,k}-\beta_{j,\ell})]pq+\E[\sigma(f+\beta_{i,k}+\beta_{j,\ell})]p\bar{q}\\
&+\E[\sigma(f-\beta_{i,k}-\beta_{j,\ell})]\bar{p}q+\E[\sigma(f-\beta_{i,k}+\beta_{j,\ell})]\bar{p}\bar{q}\\
=&\E[\sigma(f-\beta_{i,k}+\beta_{j,\ell})]pq+\E[\sigma(f-\beta_{i,k}-\beta_{j,\ell})]p\bar{q}\\
&+\E[\sigma(f+\beta_{i,k}+\beta_{j,\ell})]\bar{p}q+\E[\sigma(f+\beta_{i,k}-\beta_{j,\ell})]\bar{p}\bar{q}
\end{align*}
Let $h(\mathbf{X})=f(\mathbf{X})-\beta_{i,k}-\beta_{j,\ell}$. Then the equations become
\begin{align*}
&\E[\sigma(h+2\beta_{i,k}+2\beta_{j,\ell})]pq+\E[\sigma(h+2\beta_{i,k})]p\bar{q}\\
&+\E[\sigma(h+2\beta_{j,\ell})]\bar{p}q+\E[\sigma(h)]\bar{p}\bar{q}\\
=&\E[\sigma(h)]pq+\E[\sigma(h+2\beta_{j,\ell})]p\bar{q}\\
&+\E[\sigma(h+2\beta_{i,k})]\bar{p}q+\E[\sigma(h+2\beta_{i,k}+2\beta_{j,\ell})]\bar{p}\bar{q}
\end{align*}
and
\begin{align*}
&\E[\sigma(h+2\beta_{i,k})]pq+\E[\sigma(h+2\beta_{i,k}+2\beta_{j,\ell})]p\bar{q}\\
&+\E[\sigma(h)]\bar{p}q+\E[\sigma(h+2\beta_{j,\ell})]\bar{p}\bar{q}\\
=&\E[\sigma(h+2\beta_{j,\ell})]pq+\E[\sigma(h)]p\bar{q}\\
&+\E[\sigma(h+2\beta_{i,k}+2\beta_{j,\ell})]\bar{p}q+\E[\sigma(h+2\beta_{i,k})]\bar{p}\bar{q}
\end{align*}
respectively. Rearranging yields
\begin{align*}
&(\E[\sigma(h+2\beta_{i,k}+2\beta_{j,\ell})]-\E[\sigma(h)])pq\\
&+(\E[\sigma(h+2\beta_{i,k})]-\E[\sigma(h+2\beta_{j,\ell})])p\bar{q}\\
&=(\E[\sigma(h+2\beta_{i,k})]-\E[\sigma(h+2\beta_{j,\ell})])\bar{p}q\\
&+(\E[\sigma(h+2\beta_{i,k}+2\beta_{j,\ell})]-\E[\sigma(h)])\bar{p}\bar{q}
\end{align*}
and
\begin{align*}
&(\E[\sigma(h+2\beta_{i,k})]-\E[\sigma(h+2\beta_{j,\ell})])pq\\
&+(\E[\sigma(h+2\beta_{i,k}+2\beta_{j,\ell})]-\E[\sigma(h)])p\bar{q}\\
&=(\E[\sigma(h+2\beta_{i,k}+2\beta_{j,\ell})]-\E[\sigma(h)])\bar{p}q\\
&+(\E[\sigma(h+2\beta_{i,k})]-\E[\sigma(h+2\beta_{j,\ell})])\bar{p}\bar{q}
\end{align*}

Renaming the difference of expectation terms, we have $Apq+Bp\bar{q}=B\bar{p}q+A\bar{p}\bar{q}$ and $Bpq+Ap\bar{q}=A\bar{p}q+B\bar{p}\bar{q}$. Summing both equations yield $(A+B)p=(A+B)(1-p)$, which implies either $A=-B$ or $p=0.5$. Assume $A=-B$. Then we have, 
\begin{align*}
&Apq-Ap\bar{q}=-A\bar{p}q+A\bar{p}\bar{q}\\
&\Rightarrow Aq=A(1-q).
\end{align*} 
Since $A=\E[\sigma(h+2\beta_{i,k}+2\beta_{j,\ell})]-\E[\sigma(h)]$ cannot be zero unless $\beta_{i,k},\beta_{j,\ell}$ are zero. Hence either $p=0.5$ or $q=0.5$, which is not allowed by assumption.

\end{proof}

\end{document}